\newcommand{\cA}{\mathcal{A}}
\newcommand{\cS}{\mathcal{S}}
\newcommand{\cM}{\mathcal{M}}
\newcommand{\cI}{\mathcal{I}}
\newcommand{\cW}{\mathcal{W}}
\newcommand{\cF}{\mathcal{F}}
\newcommand{\cP}{\mathcal{P}}
\newcommand{\cT}{\mathcal{T}}
\newcommand{\cR}{\mathcal{R}}
\newcommand{\cO}{\mathcal{O}}
\newcommand{\cE}{\mathcal{E}}
\newtheorem{theorem}{Theorem}[section]
\newtheorem{definition}{Definition}[section]
\newtheorem{lemma}[theorem]{Lemma}
\newtheorem{condition}[theorem]{Condition}
\newcommand{\CA}[1]{\mathcal{#1}}
\newcommand{\Next}{\bigcirc}
\newcommand{\Always}{\Box}
\newcommand{\Event}{\diamondsuit}
\DeclareMathOperator{\until}{\CA{U}}
\DeclareMathOperator*{\argmax}{arg\,max}
\newlength\myindent
\icmltitlerunning{The Logical Options Framework}
\begin{document}

\twocolumn[
\icmltitle{The Logical Options Framework}



\icmlsetsymbol{equal}{*}

\begin{icmlauthorlist}
\icmlauthor{Brandon Araki}{mit}
\icmlauthor{Xiao Li}{mit}
\icmlauthor{Kiran Vodrahalli}{col}
\icmlauthor{Jonathan DeCastro}{tri}
\icmlauthor{J. Micah Fry}{lin}
\icmlauthor{Daniela Rus}{mit}
\end{icmlauthorlist}

\icmlaffiliation{mit}{CSAIL, Massachusetts Institute of Technology, Cambridge, MA}
\icmlaffiliation{col}{Department of Computer Science, Colombia University, New York City, NY}
\icmlaffiliation{lin}{MIT Lincoln Laboratory, Lexington, MA}
\icmlaffiliation{tri}{Toyota Research Institute, Cambridge, MA}

\icmlcorrespondingauthor{Brandon Araki}{araki@mit.edu}

\icmlkeywords{Hierarchical Reinforcement Learning, Options Framework, Formal Logic, Planning}

\vskip 0.3in
]



\printAffiliationsAndNotice{}  

\begin{abstract}
Learning composable policies for environments with complex rules and tasks is a challenging problem. We introduce a hierarchical reinforcement learning framework called the \textit{Logical Options Framework} (LOF) that learns policies that are \textit{satisfying}, \textit{optimal}, and \textit{composable}. LOF efficiently learns policies that satisfy tasks by representing the task as an automaton and integrating it into learning and planning. We provide and prove conditions under which LOF will learn satisfying, optimal policies. And lastly, we show how LOF's learned policies can be composed to satisfy unseen tasks with only 10-50 retraining steps. We evaluate LOF on four tasks in discrete and continuous domains, including a 3D pick-and-place environment.
\end{abstract}

\section{Introduction}

To operate in the real world, intelligent agents must be able to make long-term plans by reasoning over symbolic abstractions while also maintaining the ability to react to low-level stimuli in their environment \citep{zhang2020survey}. Many environments obey rules that can be represented as logical formulae; e.g., the rules a driver follows while driving, or a recipe a chef follows to cook a dish. Traditional motion and path planning techniques struggle to plan over these long-horizon tasks, but hierarchical approaches such as hierarchical reinforcement learning (HRL) can solve lengthy tasks by planning over both the high-level rules and the low-level environment. However, solving these problems involves trade-offs among multiple desirable properties, which we identify as \textit{satisfaction}, \textit{optimality}, and \textit{composability} (described below). Today's hierarchical planning algorithms lack at least one of these objectives. For example, Reward Machines \citep{icarte2018using} are satisfying and optimal, but not composable; the options framework \citep{sutton1999between} is composable and hierarchically optimal, but cannot satisfy specifications. An algorithm that achieves all three of these properties would be very powerful because it would enable a model learned on one set of rules to generalize to arbitrary rules. We introduce the \textit{Logical Options Framework}, which builds upon the options framework and aims to combine symbolic reasoning and low-level control to achieve satisfaction, optimality, and composability with as few compromises as possible. Furthermore, we demonstrate that models learned with our framework generalize to arbitrary sets of rules without any further learning, and we also show that our framework is compatible with arbitrary domains and planning algorithms, from discrete domains and value iteration to continuous domains and proximal policy optimization (PPO).

\textbf{Satisfaction:} An agent operating in an environment governed by rules must be able to satisfy the specified rules. Satisfaction is a concept from formal logic, in which the input to a logical formula causes the formula to evaluate to \texttt{True}. Logical formulae can encapsulate rules and tasks like the ones described in Fig.~\ref{fig:demo-domain}, such as ``pick up the groceries'' and ``do not drive into a lake''. In this paper, we state conditions under which our method is guaranteed to learn satisfying policies.

\begin{figure*}[!th]
\centering
\begin{subfigure}[b]{0.85\textwidth}
  \centering
  \includegraphics[width=0.99\textwidth]{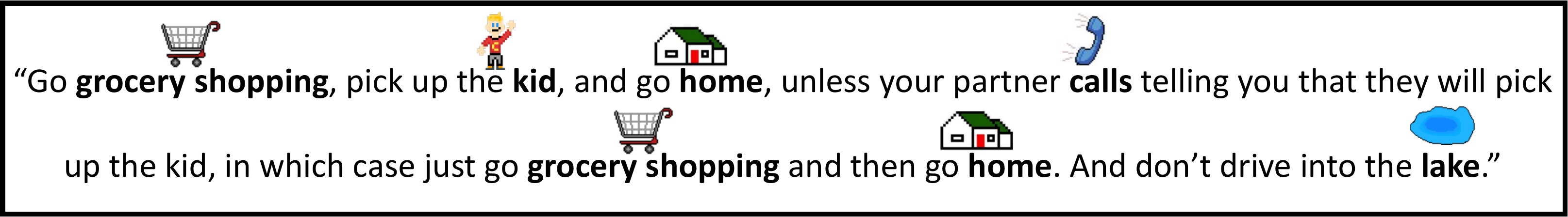}
  \caption{These natural language instructions can be transformed into an FSA, shown in (b).}
  \label{fig:demo-nl}
\end{subfigure}

\begin{subfigure}[t]{0.49\textwidth}
  \centering
  \includegraphics[width=.8\textwidth]{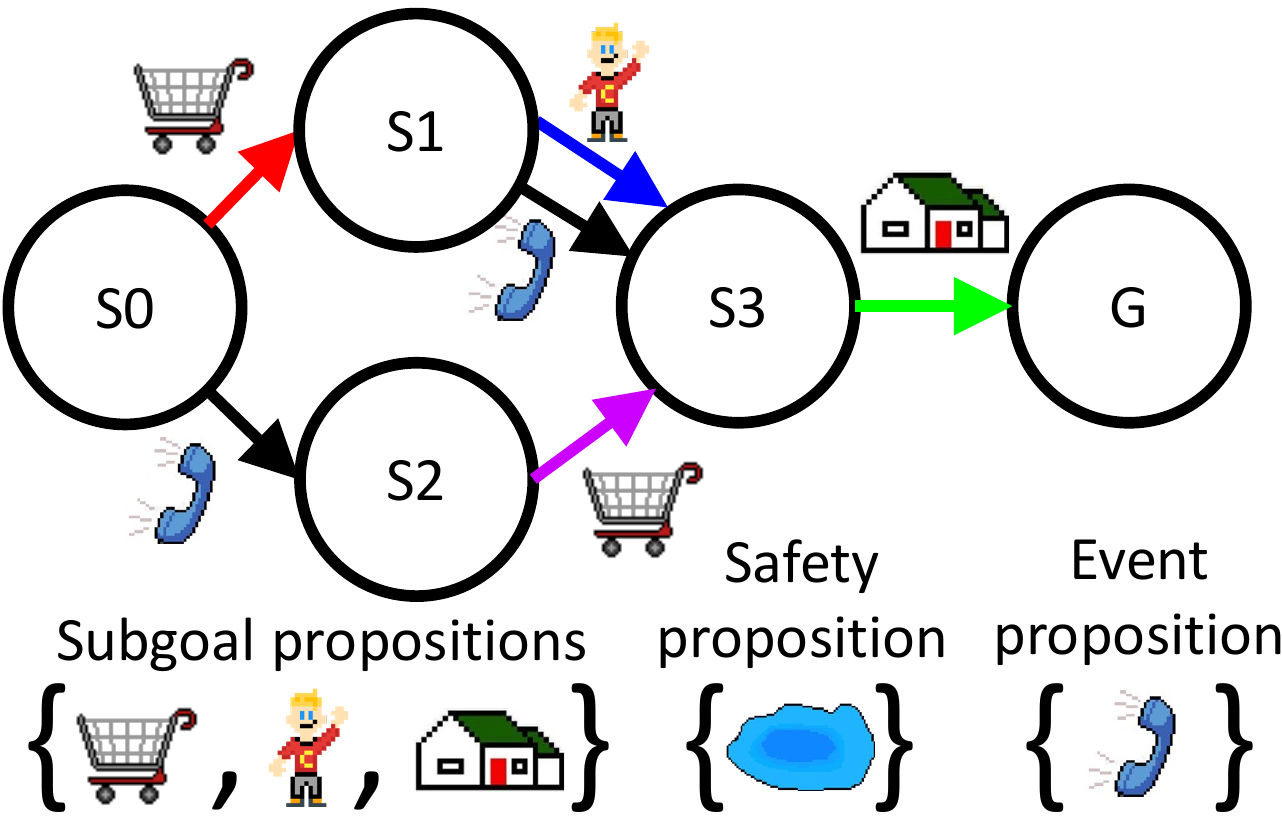}
  \caption{The FSA representing the natural language instructions. The propositions are divided into ``subgoal'', ``safety'', and ``event.''}
  \label{fig:demo-ltl-fsa}
\end{subfigure} \hfill
\begin{subfigure}[t]{.44\textwidth}
  \centering
  \includegraphics[width=.6\textwidth]{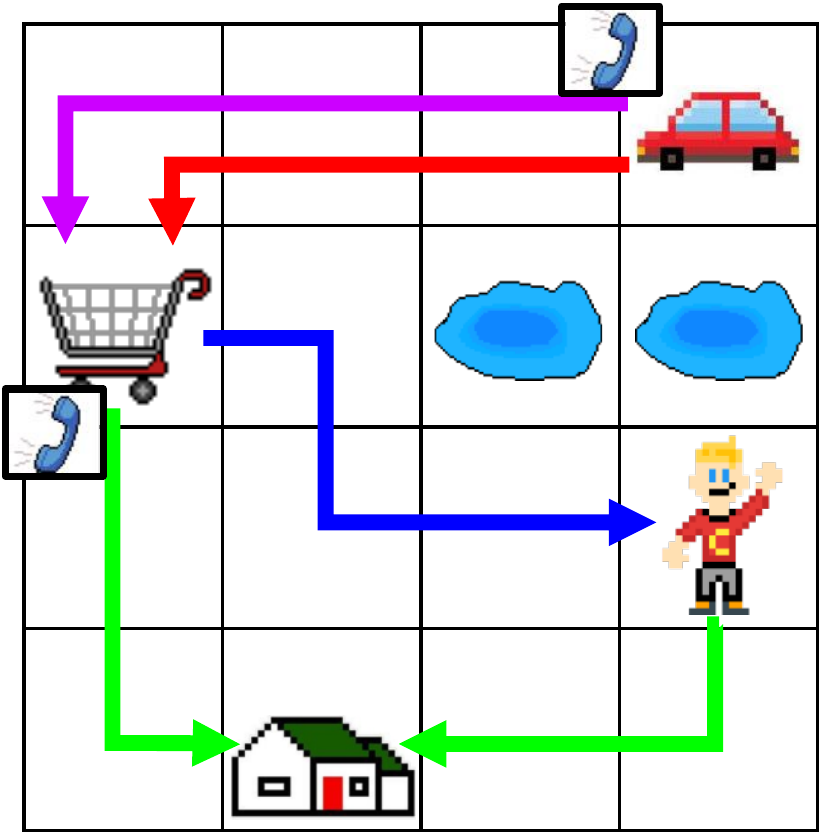}
  \caption{The low-level MDP and corresponding policy that satisfies the instructions.}
  \label{fig:demo-mdp}
\end{subfigure}

\caption{Many parents face this task after school ends -- who picks up the kid, and who gets groceries? The pictorial symbols represent propositions, which are true or false depending on the state of the environment.
The arrows in (c) represent sub-policies, and the colors of the arrows match the corresponding transition in the FSA. The boxed phone at the beginning of some of the arrows represents how these sub-policies can occur only after the agent receives a phone call.}
\label{fig:demo-domain}
\end{figure*}

\textbf{Optimality:} Optimality requires that the agent maximize its expected cumulative reward for each episode. In general, satisfaction can be achieved by rewarding the agent for satisfying the rules of the environment. In hierarchical planning there are several types of optimality, including hierarchical optimality (optimal with respect to the hierarchy) and optimality (optimal with respect to everything). We prove in this paper that our method is hierarchically optimal and, under certain conditions, optimal.

\textbf{Composability:} Our method is also composable -- once it has learned the low-level components of a task, the learned model can be rearranged to satisfy arbitrary tasks. More specifically, the rules of an environment can be factored into liveness and safety properties, which we discuss in Sec.~\ref{sec:lof}. The learned model has high-level actions called options that can be composed to satisfy new liveness properties. A shortcoming of many RL models is that they are not composable -- trained to solve one specific task, they are incapable of handling even small variations in the task structure. However, the real world is a dynamic and unpredictable place, so the ability to use a learned model to automatically reason over as-yet-unseen tasks is a crucial element of intelligence.

Fig.~\ref{fig:demo-domain} gives an example of how LOF works. The environment is a world with a grocery store, your (hypothetical) kid, your house, and some lakes, and in which you, the agent, are driving a car. The propositions are divided into ``subgoals'', representing events that can be achieved, such as going grocery shopping; ``safety'' propositions, representing events that you must avoid (driving into a lake); and ``event'' propositions, corresponding to events that you have no control over (receiving a phone call) (Fig.~\ref{fig:demo-ltl-fsa}). In this environment, you have to follow rules (Fig.~\ref{fig:demo-nl}). These rules can be converted into a logical formula, and from there into a finite state automaton (FSA) (Fig.~\ref{fig:demo-ltl-fsa}). LOF learns an option for each subgoal (illustrated by the arrows in Fig.~\ref{fig:demo-mdp}), and a meta-policy for choosing amongst the options to reach the goal state of the FSA. After learning, the options can be recombined to fulfill arbitrary tasks.

\subsection{Contributions}

This paper introduces the Logical Options Framework (LOF) and makes four contributions to the hierarchical reinforcement learning literature:

\begin{enumerate}
\item The definition of a hierarchical semi-Markov Decision Process (SMDP) that is the product of a logical FSA and a low-level environment MDP.
\item A planning algorithm for learning options and meta-policies for the SMDP that allows the options to be composed to solve new tasks with only 10-50 retraining steps and no additional samples from the environment.
\item Conditions and proofs for satisfaction and optimality.
\item Experiments on a discrete delivery domain, a continuous 2D reacher domain, and a continuous 3D pick-and-place domain on four tasks demonstrating satisfaction, optimality, and composability.
\end{enumerate}


\section{Background}



\textbf{Linear Temporal Logic:} We use linear temporal logic (LTL) to formally specify rules \citep{Clark01}. LTL can express tasks and rules using temporal operators such as ``eventually'' and ``always.'' LTL formulae are used only indirectly in LOF, as they are converted into automata that the algorithm uses directly. We chose to use LTL to represent rules because LTL corresponds closely to natural language and has proven to be a more natural way of expressing tasks and rules for engineers than designing FSAs by hand \citep{kansou2019converting}. Formulae $\phi$ 
have the syntax grammar
\begin{align*}
\phi := p \;|\; \neg \phi \;|\; \phi_1 \vee \phi_2 \;|\; \Next \phi \;|\; \phi_1 \until \phi_2 
\end{align*}
where $p$ is a \textit{proposition} (a boolean-valued truth statement that can correspond to objects or events in the world), $\neg$ is negation, $\vee$ is disjunction, $\Next$ is ``next'', and $\until$ is ``until''. The derived rules are conjunction $(\wedge)$, implication $(\implies)$, equivalence $(\leftrightarrow)$, ``eventually'' ($\Event \phi \equiv \texttt{True} \until \phi$) and ``always'' ($\Always \phi \equiv \neg \Event \neg \phi$) \citep{Baier08}. $\phi_1 \until \phi_2$ means that $\phi_1$ is true until $\phi_2$ is true, $\Event \phi$ means that there is a time where $\phi$ is true and $\Always \phi$ means that $\phi$ is always true.

\textbf{The Options Framework:} The options framework is a framework for defining and solving semi-Markov Decision Processes (SMDPs) with a type of macro-action called an option \citep{sutton1999between}.
The inclusion of options in an MDP problem turns it into an SMDP problem, because actions are dependent not just on the previous state but also on the identity of the currently active option, which could have been initiated many time steps before the current time.

An option $o$ is a variable-length sequence of actions defined as $o = (\cI, \pi, \beta, R_o(s), T_o(s' \vert s))$. $\cI \subseteq \cS$ is the initiation set of the option. $\pi : \cS \times \cA \rightarrow [0, 1]$ is the policy of the option. $\beta : \cS \rightarrow [0, 1]$ is the termination condition. $R_o(s)$ is the reward model of the option. $T_o(s' \vert s)$ is the transition model. A major challenge in option learning is that, in general, the number of time steps before the option terminates, $k$, is a random variable. With this in mind, $R_o(s)$ is defined as the expected cumulative reward of option $o$ given that the option is initiated in state $s$ at time $t$ and ends after $k$ time steps. Letting $r_t$ be the reward received by the agent at $t$ time steps from the beginning of the option, 

\begin{equation}
\label{eq:option-reward}
R_o(s) = \mathbb{E} \big[ r_{1} + \gamma r_{2} + \dots \gamma^{k-1}r_{k} \big]
\end{equation}

$T_o(s' \vert s)$ is the combined probability $p_o(s', k)$ that option $o$ will terminate at state $s'$ after $k$ time steps: 

\begin{equation}
\label{eq:option-transitions}
T_o(s' \vert s) = \sum_{k=1}^\infty p_o(s', k)\gamma^k
\end{equation}

A crucial benefit of using options is that they can be composed in arbitrary ways. In the next section, we describe how LOF composes them to satisfy logical specifications.




\section{Logical Options Framework}\label{sec:lof}

Here is a brief overview of how we will present our formulation of LOF:

\begin{enumerate}
    \item The LTL formula is decomposed into liveness and safety properties. The liveness property defines the task specification and the safety property defines the costs for violating rules.
    \item The propositions are divided into subgoals, safety propositions, and event propositions. Each subgoal is associated with its own option, whose goal is to achieve that subgoal. Safety propositions are used to define rules. Event propositions serve as control flow variables that affect the task.
    \item We define an SMDP that is the product of a low-level MDP and a high-level logical FSA.
    \item We define the logical options.
    \item We present an algorithm for finding the hierarchically optimal policy on the SMDP.
    \item We state conditions under which satisfaction of the LTL specification is guaranteed, and we prove that the planning algorithm converges to an optimal policy by showing that the hierarchically optimal SMDP policy is the same as the optimal MDP policy.
\end{enumerate}



\textbf{The Logic Formula:} LTL formulae can be translated into B\"uchi automata using automatic translation tools such as SPOT \citep{Duret16}. All B\"uchi automata can be decomposed into liveness and safety properties \citep{alpern1987recognizing}. We assume here that the LTL formula itself can be divided into liveness and safety formulae, $\phi = \phi_{liveness} \land \phi_{safety}$. For the case where the LTL formula cannot be factored, see App.~\ref{sec:appendix-lof}.
The liveness property describes ``things that must happen'' to satisfy the LTL formula. It is a task specification and is used in planning to determine which subgoals the agent must achieve. The safety property describes ``things that can never happen'' and is used to define costs for violating the rules. In LOF, the liveness property is written using a finite-trace subset of LTL called syntactically co-safe LTL \citep{bhatia2010sampling}, in which $\Always$ (``always'') is not allowed and $\Next$, $\until$, and $\Event$ are only used in positive normal form. This way, the liveness property can be satisfied by finite sequences of propositions, so the property can be represented as an FSA.

\textbf{Propositions:} Propositions are boolean-valued truth statements corresponding to goals, objects, and events in the environment. 
We distinguish between three types of propositions: subgoals $\cP_G$, safety propositions $\cP_S$, and event propositions $\cP_E$.
Subgoals must be achieved in order to satisfy the liveness property. They are associated with goals such as ``the agent is at the grocery store''. They only appear in $\phi_{liveness}$. Each subgoal may only be associated with one state. Note that in general, it may be impossible to avoid having subgoals appear in $\phi_{safety}$. App.~\ref{sec:appendix-lof} describes how to deal with this scenario.
Safety propositions are propositions that the agent must avoid -- for example, driving into a lake. They only appear in $\phi_{safety}$.
Event propositions are not goals, but they can affect the task specification -- for example, whether or not a phone call is received. They may occur in $\phi_{liveness}$, and, with extensions described in App.~\ref{sec:appendix-lof}, in $\phi_{safety}$. In the fully observable setting, event propositions are somewhat trivial because the agent knows exactly when/if the event will occur, but in the partially observable setting, they enable complex control flow.
Our optimality guarantees only apply in the fully observable setting; however, LOF's properties of satisfaction and composability still apply in the partially observable setting. The goal state of the liveness FSA must be reachable from every other state using only subgoals. This means that no matter what event propositions occur, it must be possible for the agent to satisfy the liveness property. 
$T_{P_G} : \cS \rightarrow 2^{\cP_G}$ and $T_{P_S} : \cS \rightarrow 2^{\cP_S}$ relate states to the subgoal and safety propositions that are true at that state. $T_{P_E} : 2^{\cP_E} \rightarrow \{0, 1\}$ assigns truth labels to the event propositions.

\begin{algorithm}[!t]
\caption{Learning and Planning with Logical Options}\label{alg:lof}
\begin{algorithmic}[1]
    \STATE \textbf{Given:} \par
        Propositions $\cP$ partitioned into subgoals $\cP_G$, safety propositions $\cP_S$, and event propositions $\cP_E$ \par
        Logical FSA $\cT = (\cF, \cP_G \times \cP_E, T_F, R_F, f_0, f_g)$ derived from $\phi_{liveness}$ \par
        Low-level MDP $\cE = (\cS, \cA, R_\cE, T_E, \gamma)$, where $R_\cE(s, a) = R_E(s, a) + R_S(T_{P_S}(s))$ combines the environment and safety rewards \par
        Proposition labeling functions $T_{P_G} : \cS \rightarrow 2^{\cP_G}$,  $T_{P_S} : \cS \rightarrow 2^{\cP_S}$, and $T_{P_E} : 2^{\cP_E} \rightarrow \{0, 1\}$ \par
    \STATE \textbf{To learn:} 
        \STATE Set of options $\cO$, one for each subgoal $p \in \cP_G$
        \STATE Meta-policy $\mu(f, s, o)$, $Q(f, s, o)$, and $V(f, s)$
    \STATE \textbf{Learn logical options:}\label{alg:logical-options1}
    \FOR{ $p \in \cP_G$}
    \STATE Learn an option that achieves $p$, \par $o_p = (\cI_{o_p}, \pi_{o_p}, \beta_{o_p}, R_{o_p}(s), T_{o_p}(s' \vert s))$
        \STATE $ \cI_{o_p} = \cS $
        \STATE $ \beta_{o_p} = \begin{cases} 1 &\quad\text{if } p \in T_{P_G}(s)\\
                                             0 &\quad\text{otherwise} \\
                                             \end{cases} $
        \STATE  $\pi_{o_p} = $ optimal policy on $\cE$ with rollouts terminating when $p \in T_{P_G}(s)$
        \STATE  $T_{o_p}(s' \vert s) = \begin{cases} \mathbb{E} \gamma^k &\quad \text{\parbox{11em}{if $p \in T_{P_G}(s')$; $k$ is number of time steps to reach $p$}}\\
                                             0 &\quad\text{otherwise} \\
                                             \end{cases}$\label{alg:lof-transition-model}
        \STATE $R_{o_p}(s) = \mathbb{E} [ R_\cE(s, a_{1}) + \gamma R_\cE(s_{1}, a_{2}) + \dots$ \par 
        \quad\quad\quad\quad\;\;\;$ + \gamma^{k-1} R_\cE(s_{k-1}, a_{k}) ]$ \label{alg:logical-options2}
    \ENDFOR
    \STATE \textbf{Find a meta-policy $\mu$ over the options:} \label{alg:lof-metapolicy} 
        \STATE Initialize $Q : \cF \times \cS \times \cO \rightarrow \mathbb{R}$,  $V : \cF \times \cS \rightarrow \mathbb{R}$ to $0$ 
        \FOR {$(k, f, s) \in [1, \dots, n] \times \cF \times \cS$}
        \FOR{$o \in \cO$}
        \STATE $Q_k(f, s, o) \leftarrow R_F(f)R_o(s) +$ \par $ \sum\limits_{f' \in \cF} \sum\limits_{\bar{p}_e \in 2^{\cP_E}} \sum\limits_{s' \in \cS} T_F(f' \vert f, T_P(s'), \bar{p}_e) T_{P_E}(\bar{p}_e)$ \par 
        \quad\quad\quad\quad\quad\quad\quad$T_o(s' \vert s) V_{k-1}(f', s')$
        \ENDFOR
        \STATE $V_k(f, s) \leftarrow \max\limits_{o \in \cO} Q_k(f, s, o)$
        \ENDFOR
    \STATE $\mu(f, s, o) = \argmax\limits_{o \in \cO} Q(f, s, o)$
    \STATE \textbf{Return:} Options $\cO$, meta-policy $\mu(f, s, o)$ and Q- and value functions $Q(f, s, o), V(f, s)$
\end{algorithmic}
\end{algorithm}

\textbf{Hierarchical SMDP:} LOF defines a hierarchical semi-Markov Decision Process (SMDP), learns the options, and plans over them. The high level of the SMDP is an FSA specified with LTL. The low level is an environment MDP.
We assume that the LTL specification $\phi$ can be decomposed into a liveness property $\phi_{liveness}$ and a safety property $\phi_{safety}$. The propositions $\cP$ are the union of the subgoals $\cP_G$, safety propositions $\cP_S$, and event propositions $\cP_E$. We assume that the liveness property can be translated into an FSA $\cT = (\cF, \cP, T_F, R_F, f_0, f_g)$. $\cF$ is the set of automaton states; $\cP$ is the set of propositions; $T_F$ is the transition function relating the current state and proposition to the next state, $T_F: \cF \times \cP \times \cF \rightarrow [0, 1]$. In practice, $T_F$ is deterministic despite our use of probabilistic notation. We assume that there is a single initial state $f_0$ and final state $f_g$, and that the goal state $f_g$ is reachable from every state $f \in \cF$ using only subgoals. The reward function assigns a reward to every FSA state, $R_F : \cF \rightarrow \mathbb{R}$. In our experiments, the safety property takes the form $\bigwedge_{p_s \in \cP_S} \Always \neg p_s$, which implies that no safety proposition is allowed, and that they have associated costs, $R_S : 2^{\cP_S} \rightarrow \mathbb{R}$. $\phi_{safety}$ is not limited to this form; App.~\ref{sec:appendix-lof} covers the general case.
There is a low-level environment MDP $\cE = (\cS, \cA, R_\cE, T_E, \gamma)$. $\cS$ is the state space and $\cA$ is the action space. They can be discrete or continuous. $R_E: \cS \times \cA \rightarrow \mathbb{R}$ is a low-level reward function that characterizes, for example, distance or actuation costs. $R_\cE$ is a combination of the safety reward function $R_S$ and $R_E$, e.g. $R_\cE(s, a) = R_E(s, a) + R_S(T_{P_S}(s))$.
The transition function of the environment is $T_E : \cS \times \cA \times \cS \rightarrow [0 ,1]$.

From these parts we define a hierarchical SMDP $\cM = (\cS \times \cF, \cA, \cP, \cO, T_E \times T_P \times T_F, R_{SMDP}, \gamma)$. The hierarchical state space contains two elements: low-level states $\cS$ and FSA states $\cF$. The action space is $\cA$. The set of propositions is $\cP$. The set of options (one option associated with each subgoal in $\cP_G$) is $\cO$. The transition function consists of the low-level environment transitions $T_E$ and the FSA transitions $T_F$. $T_P = T_{P_G} \times T_{P_S} \times T_{P_E}$. We call $T_P$, relating states to propositions, a transition function because it determines when FSA transitions occur. The transitions are applied in the order $T_E$, $T_P$, $T_F$. The reward function $R_{SMDP}(f, s, o) = R_F(f)R_o(s)$, so $R_F(f)$ is a weighting on the option rewards. The SMDP has the same discount factor $\gamma$ as $\cE$. Planning is done on the SMDP in two steps: first, the options $\cO$ are learned over $\cE$ using an appropriate policy-learning algorithm such as PPO or Reward Machines. Next, a meta-policy over the task specification $\cT$ is found using the learned options and the reward function $R_{SMDP}$.

\textbf{Logical Options:} The first step of Alg.~\ref{alg:lof} is to learn the logical options. We associate every subgoal $p$ with an option $o_{p} = (\cI_{o_p}, \pi_{o_p}, \beta_{o_p}, R_{o_p}, T_{o_p})$. These terms are defined starting at Alg.~\ref{alg:lof} line~\ref{alg:logical-options1}. Every $o_{p}$ has a policy $\pi_{o_p}$ whose goal is to reach the state $s_{p}$ where $p$ is true. Options are learned by training on the environment MDP $\cE$ and terminating only when $s_{p}$ is reached. As we discuss in Sec.~\ref{sec:proofs}, under certain conditions the optimal option policy is guaranteed to always terminate at the subgoal. This allows us to simplify the transition model of Eq.~\ref{eq:option-transitions} to the form in Alg.~\ref{alg:lof} line~\ref{alg:lof-transition-model}. In the experiments, we further simplify this expression by setting $\gamma = 1$.

\textbf{Logical Value Iteration:} After finding the logical options, the next step is to find a meta-policy for FSA $\cT$ over the options (see Alg.~\ref{alg:lof} line~\ref{alg:lof-metapolicy}). Q- and value functions are found for the SMDP using the Bellman update equations:

\begin{align}
\begin{split}\label{eq:q-update}
Q_k(& f, s, o) \leftarrow R_F(f)R_o(s) +  \sum_{f' \in \cF} \sum_{\bar{p}_e \in 2^{\cP_E}} \sum_{s' \in \cS} \\
& T_F(f' \vert f, T_{P_G}(s'), \bar{p}_e) T_{P_E}(\bar{p}_e ) T_o(s' \vert s) V_{k-1}(f', s')
\end{split}\\
V_k&( f, s) \leftarrow \max_{o \in \cO} Q_k(f, s, o)  \label{eq:v-update}
\end{align}


Eq.~\ref{eq:q-update} differs from the generic equations for SMDP value iteration in that the transition function has two extra components, $\sum_{f' \in \cF} T_F(f' \vert f, T_P(s'), \bar{p}_e)$ and $\sum_{\bar{p}_e \in 2^{\cP_E}} T_{P_E}(\bar{p}_e)$. The equations are derived from \citet{araki2019learning} and the fact that, on every step in the environment, three transitions are applied: the option transition $T_o$, the event proposition ``transition'' $T_{P_E}$, and the FSA transition $T_F$. Note that $R_o(s)$ and $T_o(s' \vert s)$ compress the consequences of choosing an option $o$ at a state $s$ from a multi-step trajectory into two real-valued numbers, allowing for more efficient planning.

\subsection{Conditions for Satisfaction and Optimality}\label{sec:proofs}

Here we give an overview of the proofs and necessary conditions for satisfaction and optimality. The full proofs and definitions are in App.~\ref{sec:appendix-proofs}.

First, we describe the condition for an optimal option to always reach its subgoal. Let $\pi'(s \vert s')$ be the optimal goal-conditioned policy for reaching a goal $s'$. If the optimal option policy equals the goal-conditioned policy for reaching the subgoal $s_g$, i.e. $\pi^*(s) = \pi_g(s \vert s_g)$, then the option will always reach the subgoal. This can be stated in terms of value functions: let $V^{\pi'}(s \vert s')$ be the expected return of $\pi'(s \vert s')$. If $V^{\pi_g}(s \vert s_g) > V^{\pi'}(s \vert s') \; \forall s, s' \neq s_g$, then $\pi^*(s) = \pi_g(s \vert s_g)$. This occurs for example if $-\infty < R_\cE(s, a) < 0$ and if the episode terminates when the agent reaches $s_g$. Then $V^{\pi_g}$ is a bounded negative number, and $V^{\pi'}$ for all other states is $-\infty$. We show that if every option is guaranteed to achieve its subgoal, then there must exist at least one sequence of options that satisfies the specification.

We then give the condition for the hierarchically optimal meta-policy $\mu^*(s)$ to always achieve the FSA goal state $f_g$. In our context, hierarchical optimality means that the meta-policy is optimal over the available options.
Let $\mu'(f, s \vert f')$ be the hierarchically optimal goal-conditioned meta-policy for reaching FSA state $f'$.
If the hierarchically optimal meta-policy equals the goal-conditioned meta-policy for reaching the FSA goal state $f_g$, i.e. $\mu^*(f, s) = \mu_g(f, s \vert f_g)$, then $\mu^*(f, s)$ will always reach $f_g$. In terms of value functions: let $V^{\mu'}(f, s \vert f')$ be the expected return for $\mu'$. If $V^{\mu_g}(f, s \vert f_g) > V^{\mu'}(f, s \vert f') \forall f, s, f' \neq f_g$, then $\mu^* = \mu_g$. This occurs if all FSA rewards $R_F(f) > 0$, all environment rewards $-\infty < R_\cE(s, a) < 0$, and the episode only terminates when the agent reaches $f_g$. Then $V^{\mu_g}$ is a bounded negative number, and $V^{\mu'}$ for all other states is $-\infty$. Because LOF uses the Bellman update equations to learn the meta-policy, the LOF meta-policy will converge to the hierarchically optimal meta-policy.

Consider the SMDP where planning is allowed over low-level actions, and let us call it the ``hierarchical MDP'' (HMDP) with optimal policy $\pi^*_{HMDP}$. 
Our result is:

\begin{theorem}\label{theorem}
Given that the conditions for satisfaction and hierarchical optimality are met, the LOF hierarchically optimal meta-policy $\mu_g$ with optimal option sub-policies $\pi_g$ has the same expected returns as the optimal policy $\pi^*_{HMDP}$ and satisfies the task specification.
\end{theorem}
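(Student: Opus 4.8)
The plan is to establish the two claims separately: first that the composed policy $\mu_g \circ \pi_g$ satisfies the liveness specification (i.e.\ reaches $f_g$), and then that it attains the same expected return as the unrestricted optimal policy $\pi^*_{HMDP}$. The structural fact that drives everything is that a transition of the FSA $\cT$ is triggered only when a subgoal proposition changes truth value, so along any trajectory the FSA state $f$ is piecewise constant, changing only at the steps where a subgoal is achieved (events being exogenous and, in the fully observable setting, known in advance). I would make this the organizing principle, decomposing every low-level trajectory into segments delimited by consecutive subgoal achievements, so that within a segment the FSA contribution is frozen.

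For satisfaction I would simply combine the two conditions already stated. Under the satisfaction condition ($-\infty < R_\cE(s,a) < 0$ with rollouts terminating at the subgoal) every option's optimal policy equals the goal-conditioned policy $\pi_g(s \mid s_g)$ and is therefore guaranteed to reach its subgoal in finite time. Under the hierarchical-optimality condition ($R_F(f) > 0$, $R_\cE < 0$, termination only at $f_g$) the hierarchically optimal meta-policy equals $\mu_g(f,s \mid f_g)$ and hence drives the FSA to $f_g$; this invokes the standing assumption that $f_g$ is reachable from every $f \in \cF$ using only subgoals, so such a meta-policy exists. Composing the two, $\mu_g \circ \pi_g$ produces a finite sequence of achieved subgoals whose induced run of $\cT$ terminates in $f_g$, which is precisely satisfaction of the co-safe liveness formula.

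For equal returns I would show that hierarchical optimality coincides with full optimality here, by a Bellman fixed-point comparison. In the HMDP the per-step reward is $R_F(f)\,R_\cE(s,a)$, and within a single inter-subgoal segment $f$ is fixed, so $R_F(f)$ acts as a positive constant multiplier; minimizing the discounted cost of that segment while reaching the next subgoal is therefore the very problem $\pi_g$ solves optimally, and the aggregated quantities $R_o(s)$ and $T_o(s' \mid s)$ are exactly the discounted return and discounted termination distribution of that optimal segment. The only remaining freedom is which subgoal to target from each $(f,s)$, which is exactly the set of options available to the meta-policy, and this choice is optimized by the logical value iteration of Eqs.~\eqref{eq:q-update}--\eqref{eq:v-update}. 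I would then argue that the optimal HMDP value function and the optimal SMDP value function $V^{\mu_g}$ satisfy the same Bellman optimality equation, verifying the equivalence at the level of a single backup, and invoke uniqueness of the optimal value function to conclude that the returns coincide.

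The hard part will be the ``no better'' direction: showing that permitting arbitrary low-level actions in the HMDP cannot beat the option-restricted policy. Concretely I must rule out any optimal HMDP trajectory that wanders without committing to a subgoal, and show that on each segment its behavior is an optimal goal-conditioned policy to whichever subgoal it next achieves; the negativity and finiteness of $R_\cE$ together with termination-at-subgoal are what force this, since any policy that fails to reach a subgoal incurs value $-\infty$ and is dominated. Equally delicate is verifying that the aggregation into $(R_o, T_o)$ is exact rather than merely an inequality, so that the two Bellman operators genuinely agree; once a single backup is shown to match, uniqueness of the fixed point closes the argument and yields both equality of returns and, together with the satisfaction step, the full claim.
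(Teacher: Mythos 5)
Your proposal is correct and follows the same skeleton as the paper's proof, but you go considerably further in the one place where the paper is terse. The satisfaction half is identical: the paper likewise invokes Condition~\ref{lemma:subgoals} (every option reaches its subgoal) and Condition~\ref{lemma:satisfaction} (the hierarchically optimal meta-policy reaches $f_g$), concluding that the meta-policy executes a satisfying sequence of options. For the optimality half, the paper uses the same structural decomposition you identify --- the return factors over inter-subgoal segments, $R_F(f)>0$ acts as a positive weight on each segment, and (with the event propositions treated as independent/exogenous) the total expected return is a positive linear combination of expected option rewards --- but then it simply \emph{asserts} the conclusion: since each option reward is optimal with respect to the environment and the meta-policy is optimal over the options, the algorithm attains the optimal return. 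It never sets up the Bellman fixed-point comparison you propose, and, notably, it never explicitly argues the ``no better'' direction that you correctly flag as the hard part: ruling out an HMDP policy that gains by not committing to any subgoal, which requires the $-\infty$-value argument from the negativity and boundedness of $R_\cE$, nor does it verify that the aggregation into $(R_o, T_o)$ makes the two Bellman backups agree exactly. So your plan is best read as a rigorous filling-in of the paper's proof rather than a different proof: the paper buys brevity by leaving the dominance of segment-committed behavior implicit, while your fixed-point formulation is what one would need to make the equality of returns (rather than a one-sided inequality) airtight.
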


\subsection{Composability}
 
The results in Sec.~\ref{sec:proofs} guarantee that LOF's learned model can be composed to satisfy new tasks. Furthermore, the composed policy has the same properties as the original policy -- satisfaction and optimality. LOF's possession of composability along with satisfaction and optimality derives from two facts: 1) Options are inherently composable because they can be executed in any order. 2) If the conditions of Thm.~\ref{theorem} are met, LOF is guaranteed to find a (hierarchically) optimal policy over the options that will satisfy any liveness property that uses subgoals associated with the options. The composability of LOF distinguishes it from other algorithms that can achieve satisfaction and optimality.

\section{Experiments \& Results}\label{sec:experiments}



\textbf{Experiments:} We performed experiments to demonstrate satisfaction and composability. For satisfaction, we measure cumulative reward over training steps. Cumulative reward is a proxy for satisfaction, as the environments can only achieve the maximum reward when they satisfy their tasks. For the composability experiments, we take the trained options and record how many meta-policy retraining steps it takes to learn an optimal meta-policy for a new task.

\textbf{Environments:} We measure the performance of LOF on three environments. The first environment is a discrete gridworld (Fig.~\ref{fig:discrete-domain}) called the ``delivery domain,'' as it can represent a delivery truck delivering packages to three locations ($a$, $b$, $c$) and having a home base $h$. There are also obstacles $o$ (the black squares). The second environment is called the reacher domain, from OpenAI Gym (Fig.~\ref{fig:continuous-domain}). It is a two-link arm that has continuous state and action spaces. There are four subgoals represented by colored balls: red $r$, green $g$, blue $b$, and yellow $y$. The third environment is called the pick-and-place domain, and it is a continuous 3D environment with a robotic Panda arm from CoppeliaSim and PyRep \cite{james2019pyrep}. It is inspired by the lunchbox-packing experiments of \citet{araki2019learning} in which subgoals $r$, $g$, and $b$ are food items that must be packed into lunchbox $y$. All environments also have an event proposition called $can$, which represents when the need to fulfill part of a task is cancelled.





\textbf{Tasks:} We test satisfaction and composability on four tasks. The first task is a ``sequential'' task. For the delivery domain, the LTL formula is $\Event(a \land \Event(b \land \Event(c \land \Event h))) \land \Always \neg o$ -- ``deliver package $a$, then $b$, then $c$, and then return to home $h$. And always avoid obstacles.'' The next task is the ``IF'' task (equivalent to the task shown in Fig.~\ref{fig:demo-ltl-fsa}): $(\Event (c \land \Event a) \land \Always \neg can) \lor (\Event c \land \Event can) \land \Always \neg o$ -- ``deliver package $c$, and then $a$, unless $a$ gets cancelled. And always avoid obstacles''. We call the third task the ``OR'' task, $\Event ((a \lor b) \land \Event c) \land \Always \neg o$  -- ``deliver package $a$ or $b$, then $c$, and always avoid obstacles''. The ``composite'' task has elements of all three of the previous tasks: $(\Event((a \lor b) \land \Event(c \land \Event h)) \land \Always \neg can) \lor (\Event((a \lor b) \land \Event h) \land \Event can) \land \Always \neg o$. ``Deliver package $a$ or $b$, and then $c$, unless $c$ gets cancelled, and then return to home $h$. And always avoid obstacles''. The tasks for the reacher and pick-and-place environments are equivalent, except that there are no obstacles for the reacher and arm to avoid.

The sequential task is meant to show that planning is efficient and effective even for long-time horizon tasks. The ``IF'' task shows that the agent's policy can respond to event propositions, such as being alerted that a delivery is cancelled. The ``OR'' task is meant to demonstrate the optimality of our algorithm versus a greedy algorithm, as discussed in Fig.~\ref{fig:lof-vs-greedy}. Lastly, the composite task shows that learning and planning are efficient and effective even for complex tasks. 

\textbf{Baselines:} We test four baselines against our algorithm. Our algorithm is \texttt{LOF-VI}, short for ``Logical Options Framework with Value Iteration,'' because it uses value iteration for high-level planning. \texttt{LOF-QL} uses Q-learning instead (details are in App.~\ref{sec:appendix-lof-ql}).
Unlike \texttt{LOF-VI}, \texttt{LOF-QL} does not need explicit knowledge of $T_F$, the FSA transition function.
\texttt{Greedy} is a naive implementation of task satisfaction; it uses its knowledge of the FSA to select the next subgoal with the lowest cost to attain.
This leaves it vulnerable to choosing suboptimal paths through the FSA, as shown in Fig.~\ref{fig:lof-vs-greedy}.
\texttt{Flat Options} uses the options framework with no knowledge of the FSA. Its SMDP formulation is not hierarchical -- the state space and transition function do not contain high-level states $\cF$ or transition function $T_F$.
The last baseline is \texttt{RM}, short for Reward Machines \citep{icarte2018using}.
Whereas LOF learn options to accomplish subgoals, \texttt{RM} learns sub-policies for every FSA state. App.~\ref{sec:appendix-lof-vs-rm} discusses the differences between \texttt{RM} and LOF in detail.

\begin{figure}
  \centering
  \includegraphics[width=0.75\columnwidth]{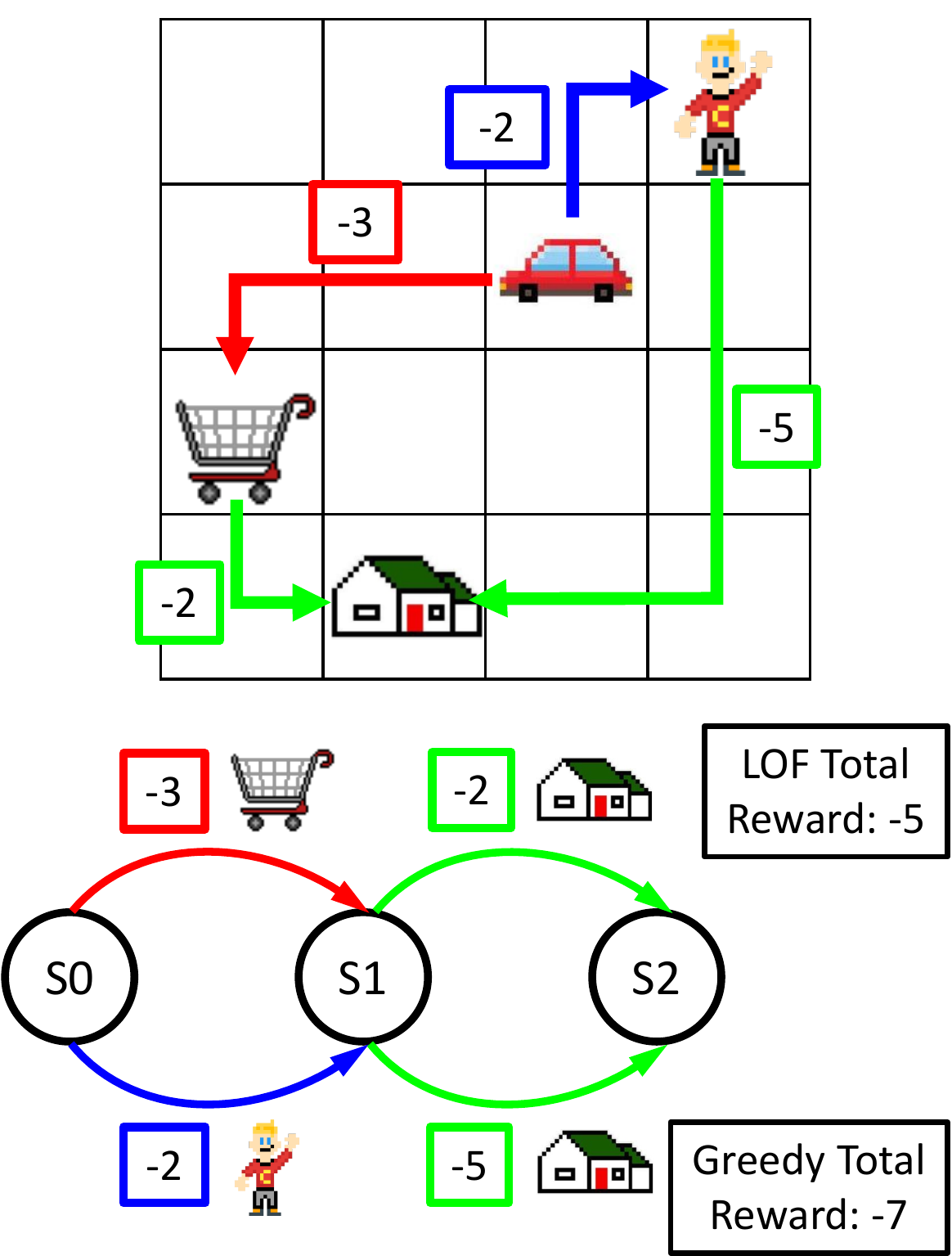}
  \caption{In this environment, the agent must either pick up the kid or go grocery shopping, and then go home (the ``OR'' task). Starting at $S0$, the greedy algorithm picks the next step in the FSA with the lowest cost (picking up the kid), which leads to a higher overall cost. LOF finds the optimal path through the FSA.}
  \label{fig:lof-vs-greedy}
\end{figure}

\begin{figure*}[!th]
\centering
\begin{subfigure}[b]{0.3\textwidth}
  \centering
  \includegraphics[width=3cm]{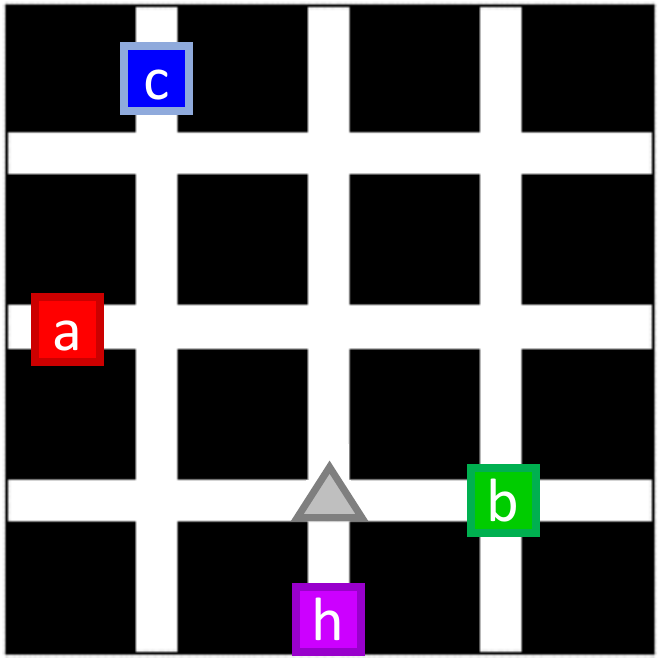}
  \caption{Delivery domain.}
  \label{fig:discrete-domain}
\end{subfigure} \hfill
\begin{subfigure}[b]{.3\textwidth}
  \centering
  \includegraphics[width=4cm]{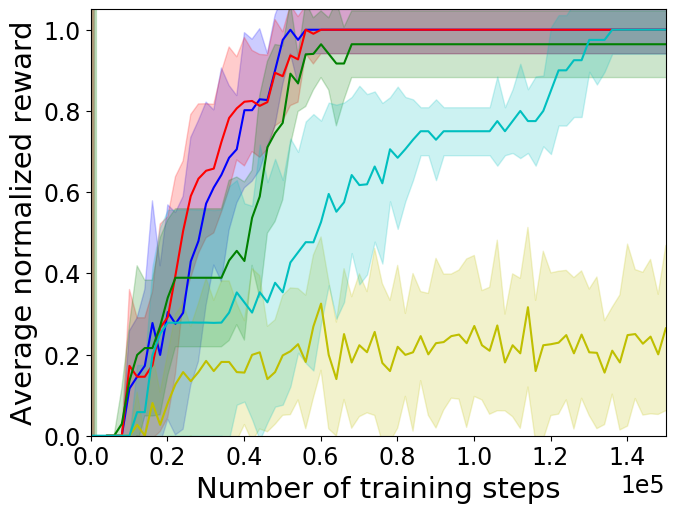}
  \caption{Satisfaction performance.}
  \label{fig:discrete-ave-perf}
 \end{subfigure} \hfill
\begin{subfigure}[b]{.3\textwidth}
  \centering
  \includegraphics[width=4cm]{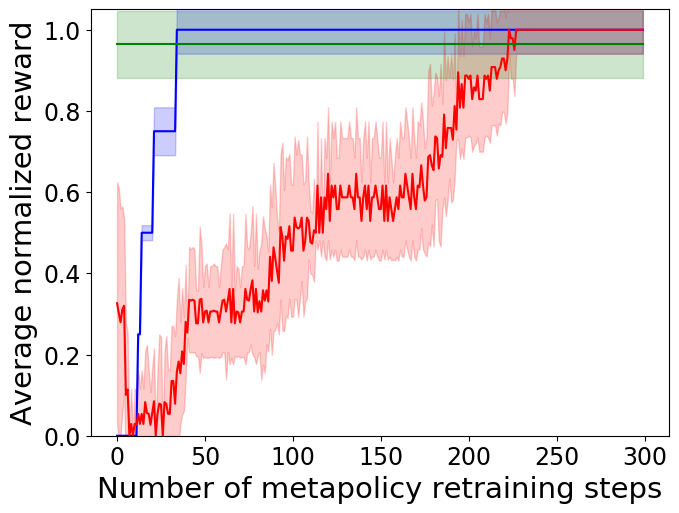}
  \caption{Composability performance.}
  \label{fig:discrete-composability}
\end{subfigure}

\begin{subfigure}[b]{.3\textwidth}
  \centering
  \includegraphics[width=4cm]{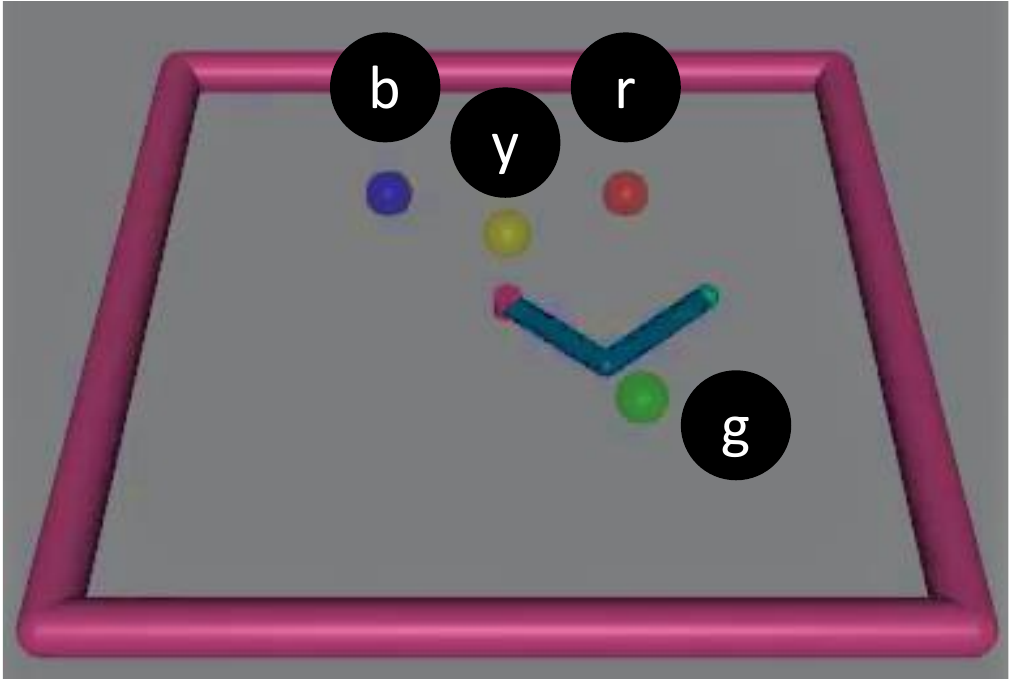}
  \caption{Reacher domain.}
  \label{fig:continuous-domain}
\end{subfigure} \hfill
\begin{subfigure}[b]{.3\textwidth}
  \centering
  \includegraphics[width=4cm]{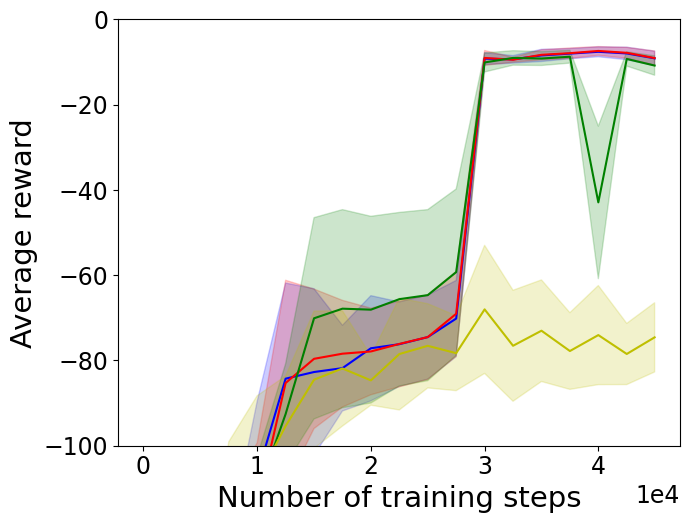}
  \caption{Satisfaction performance.}
  \label{fig:continuous-ave-perf}
 \end{subfigure} \hfill
 \begin{subfigure}[b]{.3\textwidth}
  \centering
  \includegraphics[width=4cm]{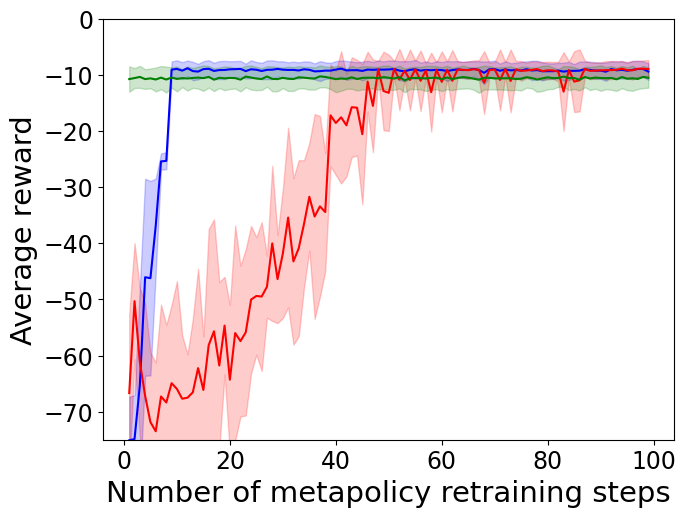}
  \caption{Composability performance.}
  \label{fig:continuous-composability}
 \end{subfigure}

\begin{subfigure}[b]{.3\textwidth}
  \centering
  \includegraphics[width=3.4cm]{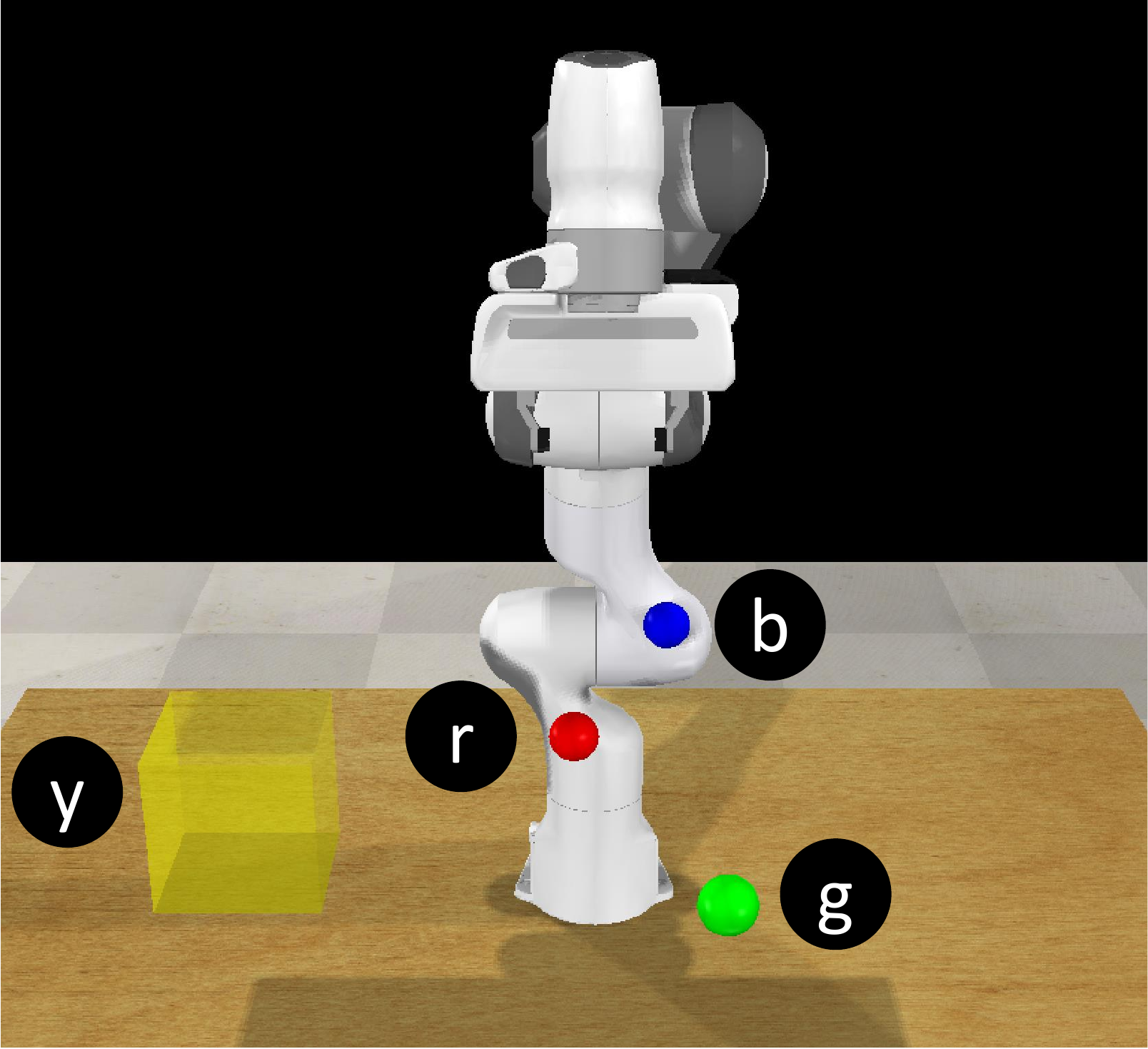}
  \caption{Pick-and-place domain.}
  \label{fig:arm-domain}
\end{subfigure} \hfill
\begin{subfigure}[b]{.3\textwidth}
  \centering
  \includegraphics[width=4cm]{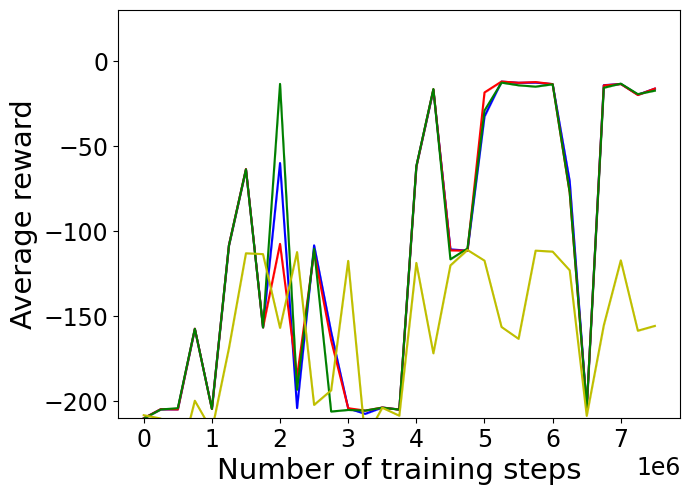}
  \caption{Satisfaction performance.}
  \label{fig:arm-ave-perf}
 \end{subfigure} \hfill
 \begin{subfigure}[b]{.3\textwidth}
  \centering
  \includegraphics[width=4cm]{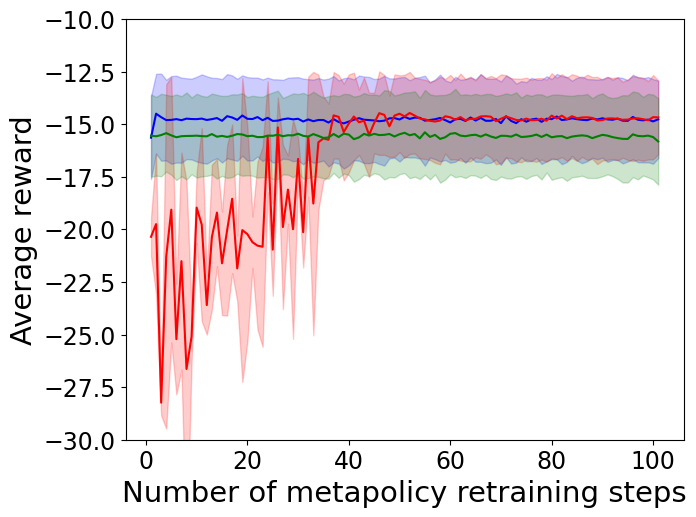}
  \caption{Composability performance.}
  \label{fig:arm-composability}
 \end{subfigure}

\begin{subfigure}[b]{\textwidth}
  \centering
  \includegraphics[height=0.6cm]{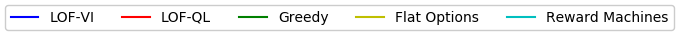}
  \label{fig:satisfaction-legend}
\end{subfigure}
\caption{Performance on the satisfaction and composability experiments, averaged over all tasks. Note that \texttt{LOF-VI} composes new meta-policies in just 10-50 retraining steps. The first row is the delivery domain, the second row is the reacher domain, and the third row is the pick-and-place domain.
All results, including \texttt{RM} performance on the reacher and pick-and-place domains, are in App.~\ref{sec:appendix-results}.}
\label{fig:satisfaction-experiments}
\end{figure*}

\textbf{Implementation:} For the delivery domain, options were learned using Q-learning with an $\epsilon$-greedy exploration policy.
\texttt{RM} was learned using the Q-learning for Reward Machines (QRM) algorithm described in \citep{icarte2018using}. For the reacher and pick-and-place domains, options were learned by using proximal policy optimization (PPO) \citep{schulman2017proximal} to train goal-oriented policy and value functions, which were represented using $128 \times 128$ and $128 \times 128 \times 128$ fully connected neural networks, respectively. Deep-QRM was used to train \texttt{RM}. The implementation details are discussed more fully in App.~\ref{sec:appendix-experiments}.

\subsection{Results}

\textbf{Satisfaction:} Results for the satisfaction experiments, averaged over all four tasks, are shown in Figs.~\ref{fig:discrete-ave-perf},~\ref{fig:continuous-ave-perf}, and~\ref{fig:arm-ave-perf}. (Results on all tasks are in App.~\ref{sec:appendix-results}). As expected, \texttt{Flat Options} shows no ability to satisfy tasks, as it has no knowledge of the FSAs. \texttt{Greedy} trains as quickly as \texttt{LOF-VI} and \texttt{LOF-QL}, but its returns plateau before the others because it chooses suboptimal paths in the composite and OR tasks. The difference is small in the continuous domains but still present. \texttt{LOF-QL} achieves as high a return as \texttt{LOF-VI}, but it is less composable (discussed below). \texttt{RM} learns much more slowly than the other methods. This is because for \texttt{RM}, a reward is only given for reaching the goal state, whereas in the LOF-based methods, options are rewarded for reaching their subgoals, so during training LOF-based methods have a richer reward function than \texttt{RM}. For the continuous domains, \texttt{RM} takes an order of magnitude more steps to train, so we left it out of the figures for clarity (see App. Figs.~\ref{fig:appendix-continuous-satifaction-experiments-with-rm} and \ref{fig:appendix-arm-satifaction-experiments-with-rm}). However, in the continuous domains, \texttt{RM} eventually achieves a higher return than the LOF-based methods. This is because for those domains, we define the subgoals to be spherical regions rather than single states, violating one of the conditions for optimality. Therefore, for example, it is possible that the meta-policy does not take advantage of the dynamics of the arm to swing through the subgoals more efficiently. \texttt{RM} does not have this condition and learns a single policy that can take advantage of inter-subgoal dynamics to learn a more optimal policy.

\textbf{Composability:} The composability experiments were done on the three composable baselines, \texttt{LOF-VI}, \texttt{LOF-QL}, and \texttt{Greedy}. App.~\ref{sec:appendix-lof-vs-rm} discusses why \texttt{RM} is not composable. \texttt{Flat Options} is not composable because its formulation does not include the FSA $\cT$. Therefore it is completely incapable of recognizing and adjusting to changes in the FSA. The composability results are shown in Figs.~\ref{fig:discrete-composability},~\ref{fig:continuous-composability}, and~\ref{fig:arm-composability}. \texttt{Greedy} requires no retraining steps to ``learn'' a meta-policy on a new FSA -- given its current FSA state, it simply chooses the next available FSA state that has the lowest cost to achieve. However, its meta-policy may be arbitrarily suboptimal. \texttt{LOF-QL} learns optimal (or in the continuous case, close-to-optimal) policies, but it takes $\sim$50-250 retraining steps, versus $\sim$10-50 for \texttt{LOF-VI}. Therefore \texttt{LOF-VI} strikes a balance between \texttt{Greedy} and \texttt{LOF-QL}, requiring far fewer steps than \texttt{LOF-QL} to retrain, and achieving better performance than \texttt{Greedy}.

\section{Related Work}


We distinguish our work from related work in HRL by its possession of three desirable properties -- composability, satisfaction, and optimality. Most other works possess two of these properties at the cost of the other.

\textbf{Not Composable:} The previous work most similar to ours is \citet{icarte2018using, icarte2019learning}, which introduces a method to solve tasks defined by automata called Reward Machines. Their method learns a sub-policy for every state of the automaton that achieves satisfaction and optimality. However, the learned sub-policies have limited composability because they end up learning a specific path through the automaton, and if the structure of the automaton is changed, there is no guarantee that the sub-policies will be able to satisfy the new automaton without re-training. By contrast, LOF learns a sub-policy for every subgoal, independent of the automaton, and therefore the sub-policies can be arranged to satisfy arbitrary tasks. Another similar work is Logical Value Iteration (LVI) \citep{araki2019learning, araki2020deep}. LVI defines a hierarchical MDP and value iteration equations that find satisfying and optimal policies; however, the algorithm is limited to discrete domains and has limited composability. A number of HRL algorithms use reward shaping to guide the agent through the states of an automaton \citep{li2017reinforcement, li2019formal, camacho2019ltl, hasanbeig2018logically, jothimurugan2019composable, shah2020planning, yuan2019modular}. While these algorithms can guarantee satisfaction and sometimes optimality, they cannot be composed because their policies are not hierarchical. Another approach is to use a symbolic planner to find a satisfying sequence of tasks and use an RL agent to learn and execute that sequence of tasks \citep{gordon2019should, illanes2020symbolic, lyu2019sdrl}. However, the meta-controllers of \citet{gordon2019should} and \citet{lyu2019sdrl} are not composable as they are trained together with the low-level controllers. Although the work of \citet{illanes2020symbolic} is amenable to transfer learning, it is not composable. \citet{paxton2017combining, mason2017assured} use logical constraints to guide exploration, and while these approaches are also satisfying and optimal, they are not composable as the agent is trained for a specific set of rules.
LOF is composable unlike the above methods because it has a hierarchical action space with high-level options. Once the options are learned, they can be composed arbitrarily.

\textbf{Not Satisfying:} Most hierarchical frameworks cannot satisfy tasks. Instead, they focus on using state and action abstractions to make learning more efficient \citep{dietterich2000hierarchical, dayan1993feudal, parr1998reinforcement, diuk2008object, oh2019planning}.
The options framework \citep{sutton1999between} stands out because of its composability and its guarantee of hierarchical optimality, which is why we based our work off of it. There is also a class of HRL algorithms
that builds on the idea of goal-oriented policies that can navigate to nearby subgoals \citep{eysenbach2019search, ghosh2018learning, faust2018prm}. By sampling sequences of subgoals and using a goal-oriented policy to navigate between them, these algorithms can travel much longer distances than a policy can travel on its own. Although these algorithms are ``composable'' in that they can navigate to far-away goals without further training, they are not able to solve tasks.
\citet{andreas2017modular} present an algorithm for solving simple policy ``sketches'' which is also composable; however, sketches are considerably less expressive than automata and linear temporal logic, which we use.
Unlike the above methods, LOF is satisfying because it has a hierarchical state space with low-level MDP states and high-level FSA states. Therefore LOF can satisfy tasks by learning policies that reach the FSA goal state.

\textbf{Not Optimal:} In HRL, there are at least three types of optimality -- hierarchical, recursive, and overall. As defined in \citet{dietterich2000hierarchical}, the hierarchically optimal policy is the optimal policy given the constraints of the hierarchy, and recursive optimality is when a policy is optimal given the policies of its children. For example, the options framework is hierarchically optimal, while MAXQ and abstract MDPs \citep{gopalan2017planning} are recursively optimal. The method described in \citet{kuo2020encoding} is fully composable, but not optimal as it uses a recurrent neural network to generate a sequence of high-level actions and is therefore not guaranteed to find optimal policies. LOF is hierarchically optimal because it finds an optimal meta-policy over the high-level options, and as we state in the paper, there are also conditions under which the overall policy is optimal.

\section{Discussion and Conclusion}

In this work, we claim that LOF has a unique combination of three properties: satisfaction, optimality, and composability. We state and prove the conditions for satisfaction and optimality in Sec.~\ref{sec:proofs}. The experimental results confirm our claims while also pointing out some weaknesses. \texttt{LOF-VI} achieves optimal or near-optimal policies and trains an order of magnitude faster than the existing work most similar to it, \texttt{RM}. However, the optimality condition that each subgoal be associated with one state cannot be met for continuous domains, and therefore \texttt{RM} eventually outperforms \texttt{LOF-VI}. But even when optimality is not guaranteed, \texttt{LOF-VI} is hierarchically optimal, which is why it outperforms \texttt{Greedy} in the composite and OR tasks.
Next, the composability experiments show that \texttt{LOF-VI} can compose its learned options to perform new tasks in about 10-50 iterations. Although \texttt{Greedy} requires no retraining steps, it is a tiny fraction of the tens of thousands of steps required to learn the original policy. 
Lastly, we have shown that LOF learns policies efficiently, and that it can be used with a variety of domains and policy-learning algorithms. 

\section*{Acknowledgments}

Toyota Research Institute provided funds to support this work. This material is also supported by the Office of Naval Research grant ONR N00014-18-1-2830, and the Under Secretary of Defense for Research and Engineering under Air Force Contract No. FA8702-15-D-0001. Any opinions, findings, conclusions or recommendations expressed in this material are those of the author(s) and do not necessarily reflect the views of the Under Secretary of Defense for Research and Engineering.

\bibliography{main}
\bibliographystyle{icml2021}

\clearpage

\appendix

\section{Formulation of Logical Options Framework with Safety Automaton}\label{sec:appendix-lof}

In this section, we present a more general formulation of LOF than that presented in the paper. In the paper, we make two assumptions that simplify the formulation. The first assumption is that the LTL specification can be divided into two independent formulae, a liveness property and a safety property: $\phi = \phi_{liveness} \land \phi_{safety}$. However, not all LTL formulae can be factored in this way. We show how LOF can be applied to LTL formulae that break this assumption. The second assumption is that the safety property takes a simple form that can be represented as a penalty on safety propositions. We show how LOF can be used with arbitrary safety properties.

\subsection{Automata and Propositions}

All LTL formulae can be translated into B\"uchi automata using automatic translation tools such as SPOT \citep{Duret16}. All B\"uchi automata can be decomposed into liveness and safety properties \citep{alpern1987recognizing}, so that automaton $\cW = \cW_{liveness} \times \cW_{safety}$. This is a generalization of the assumption that all LTL formulae can be divided into liveness and safety properties $\phi_{liveness}$ and $\phi_{safety}$. The liveness property $\cW_{liveness}$ must be an FSA, although this assumption could also be loosened to allow it to be a deterministic B\"uchi automaton via some minor modifications (allowing multiple goal states to exist and continuing episodes indefinitely, even once a goal state has been reached).

As in the main text, we assume that there are three types of propositions -- subgoals $\cP_G$, safety propositions $\cP_S$, and event propositions $\cP_E$. The event propositions have set values and can occur in both $\cW_{liveness}$ and $\cW_{safety}$. Safety propositions only appear in $\cW_{safety}$. Subgoal propositions only appear in $\cW_{liveness}$. Each subgoal may only be associated with one state. Note that after writing a specification and decomposing it into $\cW_{liveness}$ and $\cW_{safety}$, it is possible that some subgoals may unexpectedly appear in $\cW_{safety}$. This can be dealt with by creating ``safety twins'' of each subgoal -- safety propositions that are associated with the same low-level states as the subgoals and can therefore substitute for them in $\cW_{safety}$.

Subgoals are propositions that the agent must achieve in order to reach the goal state of $\cW_{liveness}$. Although event propositions can also define transitions in $\cW_{liveness}$, we assume that ``achieving'' them is not necessary in order to reach the goal state. In other words, we assume that from any state in $\cW_{liveness}$, there is a path to the goal state that involves only subgoals. This is because in our formulation, the event propositions are meant to serve as propositions that the agent has no control over, such as receiving a phone call. If satisfaction of the liveness property were to depend on such a proposition, then it would be impossible to guarantee satisfaction. However, if the user is unconcerned with guaranteeing satisfaction, then specifying a liveness property in which satisfaction depends on event propositions is compatible with LOF.

Safety propositions may only occur in $\cW_{safety}$ and are associated with things that the agent ``must avoid''. This is because every state of $\cW_{safety}$ is an accepting state \citep{alpern1987recognizing}, so all transitions between the states are non-violating. However, any undefined transition is not allowed and is a violation of the safety property. In our formulation, we assign costs to violations, so that violations are allowed but come at a cost. In practice, it also may be the case that the agent is in a low-level state from which it is impossible to reach the goal state without violating the safety property. In our formulation, satisfaction of the liveness property (but not the safety property) is still guaranteed in this case, as the finite cost associated with violating the rule is less than the infinite cost of not satisfying the liveness property, so the optimal policy for the agent will be to violate the rule in order to satisfy the task (see the proofs, Appendix~\ref{sec:appendix-proofs}). This scenario can be avoided in several ways. For example, do not specify an environment in which it is only possible for the agent to satisfy the task by violating a rule. Or, instead of prioritizing satisfaction of the task, it is possible to instead prioritize satisfaction of the safety property. In this case, satisfaction of the liveness property would not be guaranteed but satisfaction of the safety property would be guaranteed. This could be accomplished by terminating the rollout if a safety violation occurs.

We assume that event propositions are observed -- in other words, that we know the values of the event propositions from the start of a rollout. This is because we are planning in a fully observable setting, so we must make this assumption to guarantee convergence to an optimal policy. However, the partially observable case is much more interesting, in which the values of the event propositions are not known until the agent checks or the environment randomly reveals their values. This case is beyond the scope of this paper; however, LOF can still guarantee satisfaction and composability in this setting, just not optimality.

Proposition labeling functions relate states to propositions: $T_{P_G} : \cS \rightarrow 2^{\cP_G}$, $T_{P_S} : \cS \rightarrow 2^{\cP_S}$, and $T_{P_E} : 2^{\cP_E} \rightarrow \{0, 1\}$.

Given these definitions of propositions, it is possible to define the liveness and safety properties formally. $\cW_{liveness} = (\cF, \cP_G \cup \cP_E, T_F, R_F, f_0, f_g)$. $\cF$ is the set of states of the liveness property. The propositions can be either subgoals $\cP_G$ or event propositions $\cP_E$. The transition function relates the current FSA state and active propositions to the next FSA state, $T_F : \cF \times 2^{\cP_G} \times 2^{\cP_E} \times \cF \rightarrow [0, 1]$. The reward function assigns a reward to the current FSA state, $R_F : \cF \rightarrow \mathbb{R}$. We assume there is one initial state $f_0$ and one goal state $f_g$.

The safety property is a B\"uchi automaton $\cW_{safety} = (\cF_S, \cP_S \cup \cP_E, T_S, R_S, F_0)$. $\cF_S$ are the states of the automaton. The propositions can be safety propositions $\cP_S$ or event propositions $\cP_E$. The transition function $T_S$ relates the current state and active propositions to the next state, $T_S : \cF_S \times 2^{\cP_S} \times 2^{\cP_E} \times \cF_S \rightarrow [0, 1]$. The reward function relates the automaton state and safety propositions to rewards (or costs), $R_S : \cF_S \times 2^{\cP_S} \rightarrow \mathbb{R}$. $F_0$ defines the set of initial states. We do not specify an accepting condition because for safety properties, every state is an accepting state.

\subsection{The Environment MDP}

There is a low-level environment MDP $\cE = (\cS, \cA, R_E, T_E, \gamma)$. $\cS$ is the state space and $\cA$ is the action space. They can be either discrete or continuous. $R_E$ is the low-level reward function that characterizes, for example, time, distance, or actuation costs. $T_E : \cS \times \cA \times \cS \rightarrow [0, 1]$ is the transition function and $\gamma$ is the discount factor. Unlike in the simpler formulation in the paper, we do not combine $R_E$ and the safety automaton reward function $R_S$ in the MDP formulation $\cE$.

\begin{algorithm}[!t]
\caption{Learning and Planning with Logical Options}\label{alg:lof-full}
\begin{algorithmic}[1]
    \STATE \textbf{Given:} \par
        Propositions $\cP$ partitioned into subgoals $\cP_G$, safety propositions $\cP_S$, and event propositions $\cP_E$ \par
        $\cW_{liveness} = (\cF, \cP_G \cup \cP_E, T_F, R_F, f_0, f_g)$ \par
        $\cW_{safety} = (\cF_S, \cP_S \cup \cP_E, T_S, R_S, F_0)$ \par
        Low-level MDP $\cE = (\cS, \cA, R_E, T_E, \gamma)$ \par
        Proposition labeling functions $T_{P_G} : \cS \rightarrow 2^{\cP_G}$,  $T_{P_S} : \cS \rightarrow 2^{\cP_S}$, and $T_{P_E} : 2^{\cP_E} \rightarrow \{0, 1\}$ \par
    \STATE \textbf{To learn:} 
        \STATE Set of options $\cO$, one for each subgoal proposition $p \in \cP_G$
        \STATE Meta-policy $\mu(f, f_s, s, o)$ along with $Q(f, f_s, s, o)$ and $V(f, f_s, s)$
    \STATE \textbf{Learn logical options:}\label{alg:logical-options-appendix}
    \STATE For every $p$ in $\cP_G$, learn an option for achieving $p$, $o_p = (\cI_{o_p}, \pi_{o_p}, \beta_{o_p}, R_{o_p}, T_{o_p})$ \par
        \STATE $ \cI_{o_p} = \cS $
        \STATE $ \beta_{o_p} = \begin{cases} 1 &\quad\text{if } p \in T_{P_G}(s)\\
                                             0 &\quad\text{otherwise} \\
                                             \end{cases} $
        \STATE $\pi_{o_p} = $ optimal policy on $\cE \times \cW_{safety}$ with rollouts terminating when $p \in T_{P_G}(s)$
        \STATE $T_{o_p}(f'_s, s' \vert f_s, s) = \begin{cases} \sum\limits_{k=1}^{\infty} p(f'_s, k) \gamma^k &\quad\text{if } p \in T_P(s') \\
                                             0 &\quad\text{otherwise} \\
                                             \end{cases}$
        \STATE $R_{o_p}(f_s, s) = \mathbb{E} [ \cR_\cE(f_{s}, s, a_1) + \gamma \cR_\cE(f_{s, 1}, s_1, a_2) + \dots + \gamma^{k-1} \cR_\cE(f_{s, k-1}, s_{k-1}, a_{k}) ]$ 
    \STATE \textbf{Find a meta-policy $\mu$ over the options:}
        \STATE Initialize $Q : \cF \times \cF_S \times \cS \times \cO \rightarrow \mathbb{R}$ and $V : \cF \times \cF_S \times \cS \rightarrow \mathbb{R}$ to $0$
        \FOR{$(k, f, f_s, s) \in [1, \dots, n] \times \cF \times \cF_S \times \cS$:}
        \FOR{$o \in \cO$:}
        \STATE $Q_k(f, f_s, s, o) \leftarrow R_F(f)R_o(f_s, s) + $ \par $\sum\limits_{f' \in \cF} \sum\limits_{f'_s \in \cF_S} \sum\limits_{\bar{p}_e \in 2^{\cP_E}} \sum\limits_{s' \in \cS} T_F(f' \vert f, T_{P_G}(s'), \bar{p}_e)$ \par $ T_S(f'_s \vert f_s, T_{P_S}(s'), \bar{p}_e) T_{P_E}(\bar{p}_e)$ \par $T_o(s' \vert s) V_{k-1}(f', f'_s, s')$
        \ENDFOR
        \STATE $V_k(f, f_s, s) \leftarrow \max\limits_{o \in \cO} Q_k(f, f_s, s, o)$
        \ENDFOR
    \STATE $\mu(f, f_s, s, o) = \argmax\limits_{o \in \cO} Q(f, f_s, s, o)$
    \STATE \textbf{Return: } Options $\cO$, meta-policy $\mu(f, f_s, s, o)$, and $Q(f, f_s, s, o), V(f, f_s, s)$
\end{algorithmic}
\end{algorithm}

\subsection{Logical Options}

We associate every subgoal $p_g$ with an option $o_{p_g} = (\cI_{p_g}, \pi_{p_g}, \beta_{p_g}, R_{p_g}, T_{p_g})$. Every $o_{p_g}$ has a policy $\pi_{p_g}$ whose goal is to reach the state $s_{p_g}$ where $p_g$ is true. Option policies are learned by training on the product of the environment and the safety automaton, $\cE \times \cW_{safety}$ and terminating training only when $s_{p_g}$ is reached. $R_\cE : \cF_S \times \cS \times \cA \rightarrow \mathbb{R}$ is the reward function of the product MDP $\cE \times \cW_{safety}$. There are many reward-shaping policy-learning algorithms that specify how to define $R_\cE$. In fact, learning a policy for $\cE \times \cW_{safety}$ is the sort of hierarchical learning problem that many reward-shaping algorithms excel at, including Reward Machines \citep{icarte2018using} and \citep{li2017reinforcement}. This is because in LOF, safety properties are not composable, so using a learning algorithm that is satisfying and optimal but not composable to learn the safety property is appropriate. Alternatively, there are many scenarios where $\cW_{safety}$ is a trivial automaton in which each safety proposition is associated with its own state, as we describe in the main paper, so penalties can be assigned to propositions and the state of the agent in $\cW_{safety}$ can be ignored.

Note that since the options are trained independently, one limitation of our formulation is that the safety properties cannot depend on the liveness state. In other words, when an agent reaches a new subgoal, the safety property cannot change. However, the workaround for this is not too complicated. First, if the liveness state affects the safety property, this implies that liveness propositions such as subgoals may be in the safety property. In this case, as we described above, the subgoals present in the safety property need to be substituted with ``safety twin'' propositions. Then during option training, a policy-learning algorithm must be chosen that will learn sub-policies for all of the safety property states, even if those states are only reached after completing a complicated task (for example, all of the sub-policies could be trained in parallel as in \citep{icarte2018using}). Lastly, during meta-policy learning and during rollouts, when a new option is chosen, the current state of the safety property must be passed to the new option.

The components of the logical options are defined starting at Alg.~\ref{alg:lof-full} line~\ref{alg:logical-options-appendix}. Note that for stochastic low-level transitions, the number of time steps $k$ at which the option terminates is stochastic and characterized by a distribution function. In general this distribution function must be learned, which is a challenging problem. However, there are many approaches to solving this problem; \citep{abel2019expected} contains an excellent discussion.

The most notable difference between the general formulation and the formulation in the paper is that the option policy, transition, and reward functions are functions of the safety automaton state $f_s$ as well as the low-level state $s$. This makes Logical Value Iteration more complicated, because in the paper, we could assume we knew the final state of each option (i.e., the state of its associated subgoal $s_g$). But now, although we still assume that the option will terminate at $s_g$, we do not know which safety automaton state it will terminate in, so the transition model must learn a distribution over safety automaton states, and Logical Value Iteration must account for this uncertainty.


\subsection{Hierarchical SMDP}

Given a low-level environment $\cE$, a liveness property $\cW_{liveness}$, a safety property $\cW_{safety}$, and logical options $\cO$,  we can define a hierarchical semi-Markov Decision Process (SMDP) $\cM = \cE \times \cW_{liveness} \times \cW_{safety}$ with options $\cO$ and reward function $R_{SMDP}$. This SMDP differs significantly from the SMDP in the paper in that the safety property $\cW_{safety}$ is now an integral part of the formulation. $R_{SMDP}(f, f_s, s, o) = R_F(f)R_o(f_s, o)$.

\subsection{Logical Value Iteration}

A value function and Q-function are found for the SMDP using the Bellman update equations:

\begin{align}
\begin{split}\label{eq:q-update-appendix}
Q_k(&f, f_s, s, o) \leftarrow R_F(f)R_o(f_s, s) + \sum_{f' \in \cF} \sum_{f'_s \in \cF_S} \\
&\sum_{\bar{p}_e \in 2^{\cP_E}} \sum_{s' \in \cS} T_F(f' \vert f, T_{P_G}(s'), \bar{p}_e) \\
&T_S(f'_s \vert f_s, T_{P_S}(s'), \bar{p}_e) T_{P_E}(\bar{p}_e) T_o(s' \vert s) V_{k-1}(f', f'_s, s')
\end{split}\\
V_k(&f, f_s, s) \leftarrow \max_{o \in \cO} Q_k(f, f_s, s, o) \label{eq:v-update-appendix}
\end{align}

\section{Proofs and Conditions for Satisfaction and Optimality}\label{sec:appendix-proofs}

The proofs are based on the more general LOF formulation of Appendix~\ref{sec:appendix-lof}, as results on the more general formulation also apply to the simpler formulation used in the paper.

\begin{definition}
\label{definition:options}
Let the reward function of the environment be $R_\cE(f_s, s, a)$, which is some combination of $R_E(s, a)$ and $R_S(f_s, \bar{p}_s) = R_S(f_s, T_{P_S}(s))$. Let $\pi' : \cF_S \times \cS \times \cA \times \cS \rightarrow [0, 1]$ be the optimal goal-conditioned policy for reaching a state $s'$. In the case of a goal-conditioned policy, the reward function is $R_\cE$, and the objective is to maximize the expected reward with the constraint that $s'$ is reached in a finite amount of time. We assume that every state $s'$ is reachable from any state $s$, a standard regularity assumption in MDP literature. Let $V^{\pi'}(f_s, s \vert s')$ be the optimal expected cumulative reward for reaching $s'$ from $s$ with goal-conditioned policy $\pi'$. Let $s_g$ be the state associated with the subgoal, and let $\pi_g$ be the optimal goal-conditioned policy associated with reaching $s_g$. Let $\pi^*$ be the optimal policy for the environment $\cE$.
\end{definition}

\begin{condition}
The optimal policy for the option must be the same as the goal-conditioned policy that has subgoal $s_g$ as its goal: $\pi^*(f_s, s) = \pi_g(f_s, s \vert s_g)$. In other words,  $V^{\pi_g}(f_s, s \vert s_g) > V^{\pi'}(f_s, s \vert s') \;\; \forall f_s, s, s' \neq s_g$.
\label{lemma:subgoals}
\end{condition}
This condition guarantees that the optimal option policy will always reach the subgoal $s_g$.
It can be achieved by setting all rewards $-\infty < R_\cE(f_s, s, a) < 0$ and terminating the episode only when the agent reaches $s_g$. Therefore the expected return for reaching $s_g$ is a bounded negative number, and the expected return for all other states is $-\infty$. 



\begin{lemma}
Given that the goal state of $\cW_{liveness}$ is reachable from any other state using only subgoals and that there is an option for every subgoal and that all the options meet Condition~\ref{lemma:subgoals}, there exists a meta-policy that can reach the FSA goal state from any non-trap state in the FSA. 
\label{lemma:satisfaction-exists}
\end{lemma}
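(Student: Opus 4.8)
The plan is to combine the two hypotheses — reliability of options at the low level and reachability at the FSA level — into a single closed-loop meta-policy, and then run an induction on an FSA-level distance to $f_g$. The key reduction is that Condition~\ref{lemma:subgoals} turns each option into a reliable \emph{primitive}: for every subgoal $p \in \cP_G$, starting the option $o_p$ from any low-level state $s$ drives the agent to the unique state $s_p$ at which $p$ holds, in finite time almost surely (otherwise the expected return would be $-\infty$). Because each subgoal is tied to exactly one low-level state and the FSA transition $T_F$ fires on every environment step, reaching $s_p$ triggers the FSA edge labelled by $p$ (together with the known event-proposition values, which are fixed in the fully observable setting). Thus, at the FSA level, selecting $o_p$ realizes exactly the subgoal-labelled edge out of the current FSA state.

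First I would make the reachability hypothesis quantitative. For each non-trap FSA state $f$, let $d(f)$ be the length of a shortest path from $f$ to $f_g$ that uses only subgoal-labelled edges; this is finite precisely because the hypothesis guarantees $f_g$ is reachable from $f$ using only subgoals, with $d(f_g)=0$. For each non-trap $f \neq f_g$ I would fix a subgoal $p_f$ whose edge leads to a successor $f'$ with $d(f') = d(f)-1$, and define the meta-policy by $\mu(f,s)=o_{p_f}$ for all $s$ (with no option needed once $f = f_g$). This is a legitimate meta-policy, since it is a function of the FSA state and the low-level state, and it is well defined on the entire non-trap region.

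Then I would prove by induction on $d(f)$ that $\mu$ reaches $f_g$ from $(f,s)$ in finitely many option executions. The base case $d(f)=0$ is immediate. For the inductive step, executing $o_{p_f}$ reaches $s_{p_f}$ in finite time by Condition~\ref{lemma:subgoals}, moving the FSA to $f'$ with $d(f')=d(f)-1$; the induction hypothesis then completes the path to $f_g$. Since $d$ is a nonnegative integer that strictly decreases with each successfully completed option and each option terminates in finite time, the total time to reach $f_g$ is finite, which gives the claim.

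I expect the main obstacle to be the possibility that, while $o_{p_f}$ travels toward $s_{p_f}$, the low-level trajectory passes through a state where a \emph{different} subgoal holds and so triggers an unintended FSA transition before $s_{p_f}$ is reached. I would handle this by exploiting that $\mu$ is closed-loop: it is re-evaluated at whatever FSA state is actually entered, so the induction only needs every FSA state reached this way to itself be non-trap, whereupon the same potential argument re-applies. Justifying that no such intermediate transition can drive the FSA into a trap state is the delicate point; I would argue it either from the structural fact that an option's trajectory toggles only its own subgoal (each subgoal corresponds to a single low-level state toward which the option heads directly), or from the stronger claim that the non-trap region is closed under the transitions actually induced by option execution. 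Everything else is a routine well-foundedness argument on $d$.
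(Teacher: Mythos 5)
Your proposal is correct and takes essentially the same route as the paper's own proof: both arguments rest on the two facts that Condition~\ref{lemma:subgoals} turns each option into a reliable primitive realizing its subgoal-labeled edge in $\cW_{liveness}$, and that $f_g$ is reachable from every non-trap state through subgoal-labeled edges, so that following such a path reaches the goal --- your shortest-path potential $d(f)$ and induction merely make explicit the paper's one-line claim that ``at least one sequence of subgoals must satisfy the specification.'' The delicate point you flag (an option's low-level trajectory crossing another subgoal's state and firing an unintended FSA transition, possibly into a trap) is silently idealized away in the paper's proof as well, so your treatment is, if anything, the more careful of the two.
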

\begin{proof}
This follows from the fact that transitions in $\cW_{liveness}$ are determined by achieving subgoals, and it is given that there exists an option for achieving every subgoal. Therefore, it is possible for the agent to execute any sequence of subgoals, and at least one of those sequences must satisfy the task specification since the FSA representing the task specification is finite and satisfiable, and the goal state $f_g$ is reachable from every FSA state $f \in \cF$ using only subgoals.
\end{proof}


\begin{definition}
From \citet{dietterich2000hierarchical}: A \textbf{hierarchically optimal} policy for an MDP or SMDP is a policy that achieves the highest cumulative reward among all policies consistent with the given hierarchy.
\label{def:hierarchical-optimality}
\end{definition}

In our case, this means that the hierarchically optimal meta-policy is optimal over the available options.

\begin{definition}
Let the expected cumulative reward function of an option $o$ started at state $(f_s, s)$ be $R_o(f_s, s)$. Let the reward function on the SMDP be $R_{SMDP}(f, f_s, s, o) = R_F(f)R_o(f_s, s)$ with $R_F(f) \geq 0$\footnote{The assumption that $R_{SMDP}(f, f_s, s, o) = R_F(f)R_o(f_s, s)$  and $R_{HMDP}(f, f_s, s, a) = R_F(f)R_\cE(f_s, s, a)$ can be relaxed so that $R_{SMDP}$ and $R_{HMDP}$ are functions that are monotonic increasing in the low-level rewards $R_o$ and $R_\cE$, respectively.}. Let $\mu' : \cF \times \cF_S \times \cS \times \cO \times \cF \rightarrow [0, 1] $ be the hierarchically optimal goal-conditioned meta-policy for achieving liveness state $f'$. The objective of the meta-policy is to maximize the reward function $R_{SMDP}$ with the constraint that it reaches $f'$ in a finite number of time steps. Let $V^{\mu'}(f, f_s, s \vert f')$ be the hierarchically optimal return for reaching $f'$ from $(f, f_s, s)$ with goal-conditioned meta-policy $mu'$. Let $\mu^*$ be the hierarchically optimal policy for the SMDP. Let $f_g$ be the goal state, and $\mu_g$ be the hierarchically optimal goal-conditioned meta-policy for achieving the goal state.
\end{definition}

\begin{condition}
The hierarchically optimal meta-policy must be the same as the goal-conditioned meta-policy that has the FSA goal state $f_g$ as its goal: $\mu^*(f,f_s,s) = \mu_g(f,f_s,s \vert f_g)$. In other words, $V^{\mu_g}(f,f_s, s \vert f_g) > V^{\mu'}(f,f_s, s \vert f') \;\; \forall f, f_s, s, f' \neq f_g$.
\label{lemma:satisfaction}
\end{condition}

This condition guarantees that the hierarchically optimal meta-policy will always go to the FSA goal state $f_g$ (thereby satisfying the specification). Here is an example of how this condition can be achieved: If $-\infty < R_\cE(f_s, s, a) < 0 \;\; \forall s$, then $R_o(f_s, s) < 0 \;\; \forall f_s, o, s$. Then if $R_F(f) > 0$ (in our experiments, we set $R_F(f) = 1 \;\; \forall f$), $R_{SMDP}(f, f_s, s, o) = R_F(f)R_o(f_s, s) < 0$, and if the episode only terminates when the agent reaches the goal state, then the expected return for reaching $f_g$ is a bounded negative number, and the expected return for all other states is $-\infty$.



\begin{lemma}
From \citep{sutton1999between}: Value iteration on an SMDP converges to the hierarchically optimal policy.
\label{lemma:converge-hierarchical}
\end{lemma}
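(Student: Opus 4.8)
The plan is to recognize this as the classical Sutton--Precup--Barto SMDP convergence result, specialized to the operator defined by the Bellman updates \eqref{eq:q-update-appendix}--\eqref{eq:v-update-appendix}, and to establish it via the Banach fixed-point theorem. First I would define the SMDP Bellman optimality operator $\mathcal{T}$ on the space $\mathcal{B}$ of bounded real-valued functions on $\cF \times \cF_S \times \cS$, equipped with the sup-norm $\norminf{\cdot}$, by $(\mathcal{T}V)(f,f_s,s) = \max_{o \in \cO}\big[R_F(f)R_o(f_s,s) + \sum_{f',f'_s,\bar{p}_e,s'} T_F \, T_S \, T_{P_E}(\bar{p}_e) \, T_o(s'\vert s) \, V(f',f'_s,s')\big]$, which is exactly one sweep of logical value iteration. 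Assuming $0 < \gamma < 1$, the statement then reduces to two claims: (i) $\mathcal{T}$ is a contraction, so the iterates $V_k = \mathcal{T}^k V_0$ converge to its unique fixed point from any initialization; and (ii) that fixed point is the hierarchically optimal value function, whose greedy meta-policy is the one returned by Alg.~\ref{alg:lof-full}.

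Second, for the contraction I would exploit the crucial (nonstandard) convention that the discount is folded into the option transition model. Because every option runs for at least one environment step ($k \ge 1$) and $0 < \gamma < 1$, summing the option kernel over terminal states gives $\sum_{s'} T_o(s'\vert s) = \expect{\gamma^k} \le \gamma < 1$. The FSA kernel $T_F$ and the event-label kernel $T_{P_E}$ are genuine normalized distributions summing to one (in the fully observable setting $T_{P_E}$ selects a single truth assignment), so they do not inflate the total mass. Combining this with the elementary bound $|\max_o a_o - \max_o b_o| \le \max_o |a_o - b_o|$ yields, for any $V, W \in \mathcal{B}$, the estimate $\norminf{\mathcal{T}V - \mathcal{T}W} \le \big(\max_{f_s,s,o}\sum_{s'} T_o(s'\vert s)\big)\,\norminf{V - W} \le \gamma\,\norminf{V - W}$, so $\mathcal{T}$ is a $\gamma$-contraction.

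Third, since $(\mathcal{B},\norminf{\cdot})$ is complete, the Banach fixed-point theorem supplies a unique fixed point $V^*$ with geometric convergence $\norminf{V_k - V^*} \le \gamma^k \norminf{V_0 - V^*}$, which establishes convergence of the iteration. It then remains to identify $V^*$ with the hierarchically optimal value function of Def.~\ref{def:hierarchical-optimality}. To do this I would run the standard policy-evaluation/policy-improvement argument over the class of meta-policies $\mu$ ranging only over options in $\cO$: for each such $\mu$ the evaluation operator $\mathcal{T}_\mu$ is likewise a $\gamma$-contraction with fixed point $V^\mu$; one shows $V^\mu \le \mathcal{T}V^\mu$ pointwise, with equality iff $\mu$ is greedy with respect to $V^\mu$; hence $V^* = \sup_\mu V^\mu$ is attained by the greedy meta-policy $\mu^*(f,f_s,s) = \argmax_{o \in \cO} Q^*(f,f_s,s,o)$. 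Because the maximization is taken over $\cO$ rather than over primitive actions, this optimum is optimal among all policies consistent with the option hierarchy, i.e.\ hierarchically optimal.

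The main obstacle is verifying the strict contraction factor $< 1$. In an ordinary discounted MDP the factor $\gamma$ appears explicitly in the Bellman operator, whereas here it is hidden inside $T_o$; the whole argument therefore hinges on the identity $\sum_{s'} T_o(s'\vert s) = \expect{\gamma^k} \le \gamma$, which in turn requires that options never terminate in zero steps and that $\expect{\gamma^k}$ is finite for stochastic termination time $k$ (both guaranteed by $\gamma < 1$ together with boundedness of the option rewards). A secondary subtlety is the identification step (ii): care is needed precisely because the maximization is restricted to $\cO$, so the fixed point is hierarchically optimal rather than fully optimal, which is exactly the content of the lemma.
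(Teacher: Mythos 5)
Your proof is correct, but there is little in the paper to compare it against: the paper states this lemma as an imported result, citing \citet{sutton1999between}, and supplies no proof of its own. What you have done is reconstruct the standard argument behind the citation, and the reconstruction is faithful. The load-bearing observation---that the discount is folded into the multi-time option model, so that $\sum_{s'} T_o(s' \vert s) = \expect{\gamma^k} \le \gamma < 1$ because every option lasts at least one step---is exactly the mechanism by which the SMDP Bellman operator becomes a sup-norm contraction in the Sutton--Precup--Barto framework, and your policy-evaluation/policy-improvement step identifying the fixed point as optimal over meta-policies ranging only over $\cO$ (hence hierarchically optimal in the sense of Definition~\ref{def:hierarchical-optimality}) is the standard completion. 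Two caveats are worth making explicit. First, your argument genuinely requires $0 < \gamma < 1$, whereas the regime in which the paper \emph{invokes} the lemma strains that hypothesis: the experiments set $\gamma = 1$, and Conditions~\ref{lemma:subgoals} and~\ref{lemma:satisfaction} are engineered so that non-goal-reaching returns are $-\infty$, i.e., an undiscounted stochastic-shortest-path setting in which $\expect{\gamma^k} = 1$, the value functions leave your space $\mathcal{B}$ of bounded functions, and Banach contraction does not literally apply; convergence there needs a properness/negative-dynamic-programming argument that neither you nor the paper gives. Since the lemma as stated is the discounted result, this is a limitation of the paper's usage rather than a gap in your proof, but it deserves a remark. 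Second, your contraction estimate silently assumes that $T_F$, $T_S$, and $T_{P_E}$ are normalized kernels; the paper types $T_{P_E}$ as a map into $\{0,1\}$, which in the fully observable case is a point mass as you note, but the assumption $\sum_{\bar{p}_e} T_{P_E}(\bar{p}_e) = 1$ should be stated rather than inherited, since otherwise the summed mass in Eq.~\ref{eq:q-update-appendix} could exceed $\expect{\gamma^k}$ and the modulus bound would fail.
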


Therefore, the meta-policy found using the Logical Options Framework converges to a hierarchically optimal meta-policy that satisfies the task specification as long as Conditions~\ref{lemma:subgoals} and~\ref{lemma:satisfaction} are met.


\begin{definition}
Consider the SMDP where planning is allowed over the low-level actions instead of the options. We will call this the hierarchical MDP (HMDP), as this MDP is the product of the low-level environment $\cE$, the liveness property $\cW_{liveness}$, and the safety property $\cW_{safety}$. Let $R_F(f) > 0 \; \forall f$, and let $R_{HMDP}(f, f_s, s, a) = R_F(f)R_\cE(f_s, s, a)$, and let $\pi^*_{HMDP}$ be the optimal policy for the HMDP.
\end{definition}

\begin{theorem}
Given Conditions~\ref{lemma:subgoals} and~\ref{lemma:satisfaction}, the hierarchically optimal meta-policy $\mu_g$ with optimal option policies $\pi_g$ has the same expected returns as the HMDP optimal policy $\pi^*$ and satisfies the task specification.
\label{theorem:converge-optimal}
\end{theorem}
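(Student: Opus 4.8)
The plan is to prove the equality of expected returns by a two-sided inequality, and then to read off satisfaction directly from Condition~\ref{lemma:satisfaction}.

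First I would establish the easy direction, $V^{\pi^*_{HMDP}} \ge V^{\mu_g}$. Executing the meta-policy $\mu_g$ together with the option sub-policies $\pi_g$ unrolls deterministically into a policy over the low-level actions $\cA$ on the product space $\cF \times \cF_S \times \cS$; this induced low-level policy is feasible in the HMDP, and by construction of $R_{SMDP}(f,f_s,s,o) = R_F(f)R_o(f_s,s)$ versus $R_{HMDP}(f,f_s,s,a) = R_F(f)R_\cE(f_s,s,a)$ — together with the fact that $R_o$ is precisely the discounted cumulative $R_\cE$ along the option — it accumulates exactly the same discounted reward. Since $\pi^*_{HMDP}$ is optimal among all HMDP policies, it can only do at least as well, giving the inequality.

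The substance of the proof is the reverse direction, $V^{\mu_g} \ge V^{\pi^*_{HMDP}}$, which I would obtain by showing that the optimal HMDP trajectory is itself realizable as a sequence of options. The key observation is that the liveness state $f$ — and hence the reward weight $R_F(f)$ — changes only when a subgoal state is reached, since by assumption every FSA transition in $\cW_{liveness}$ is triggered by a subgoal and the goal state is reachable using only subgoals. I would therefore partition any optimal HMDP rollout into maximal segments on which $f$ is constant. On such a segment the objective is $R_F(f)\sum_t \gamma^t R_\cE(f_{s,t}, s_t, a_t)$, and because $R_F(f) > 0$ is a constant multiplier it does not affect the $\argmax$ over low-level behaviour; maximizing the segment's contribution subject to terminating at the subgoal that ends the segment is exactly the goal-conditioned problem that the option solves. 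By Condition~\ref{lemma:subgoals} the option's optimal policy equals that goal-conditioned policy and is guaranteed to reach the subgoal, so the within-segment behaviour of $\pi^*_{HMDP}$ coincides with running the corresponding option, with accumulated reward $R_F(f)R_o(f_s,s)$. Concatenating the segments shows that $\pi^*_{HMDP}$ induces a meta-policy $\tilde\mu$ over $\cO$ with $V^{\tilde\mu} = V^{\pi^*_{HMDP}}$. Because $\mu_g$ is hierarchically optimal (Condition~\ref{lemma:satisfaction} with Lemma~\ref{lemma:converge-hierarchical}), it is at least as good as $\tilde\mu$, yielding the reverse inequality and hence equality. For satisfaction I would invoke Condition~\ref{lemma:satisfaction} directly: it forces $\mu_g = \mu^*$ to equal the goal-conditioned meta-policy for $f_g$, so every rollout reaches the FSA goal state, which is exactly satisfaction of the liveness specification; Lemma~\ref{lemma:satisfaction-exists} guarantees such a policy exists.

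The main obstacle is the segment-decomposition step, and in particular justifying that the within-segment optimum of $\pi^*_{HMDP}$ really is a single goal-conditioned option execution rather than some low-level behaviour with no option counterpart. Two points need care: that the negative-reward and terminate-only-at-$f_g$ construction underlying Conditions~\ref{lemma:subgoals} and~\ref{lemma:satisfaction} forces the policy to commit to reaching a subgoal on each segment rather than meandering (otherwise the return is $-\infty$); and that the goal-conditioned optimal path to the chosen subgoal does not incidentally cross a different subgoal state, which would prematurely split the segment — this is where the assumption that each subgoal occupies a single dedicated low-level state, combined with optimality ruling out needless detours, is used. Once these are pinned down, matching the option's $R_o$ and $T_o$ to the within-segment accumulation (including the safety-automaton dynamics in $f_s$, over which $R_F(f)$ does not depend) is routine.
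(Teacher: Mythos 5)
Your proposal is correct in substance but structured quite differently from the paper's proof, and it is in fact the more complete of the two. The paper argues forward only: Condition~\ref{lemma:subgoals} gives options that optimally reach their subgoals, Condition~\ref{lemma:satisfaction} forces the hierarchically optimal meta-policy to reach $f_g$ through a satisfying sequence of options, and then --- using $R_F(f)>0$, $R_{SMDP}(f,f_s,s,o)=R_F(f)R_o(f_s,s)$, and the independence of the event propositions --- it asserts that the expected return is a positive linear combination of optimal option returns, so that optimality at both levels yields the optimal return. It never explicitly compares against $\pi^*_{HMDP}$; the claim that no flat policy can do better is exactly the step left implicit. Your two-sided inequality supplies that missing comparison: the easy direction (an option-structured policy induces a feasible HMDP policy with identical accumulated reward) plus the hard direction (any optimal HMDP rollout splits at FSA transitions into goal-conditioned segments, each of which Condition~\ref{lemma:subgoals} identifies with an option execution) shows the flat optimum is realized inside the hierarchy, which $\mu_g$ dominates by Lemma~\ref{lemma:converge-hierarchical}. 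Two caveats. First, your premise that $f$ changes only at subgoal states is not quite right: $T_F$ takes $2^{\cP_E}$ as an argument, so event propositions can also trigger liveness transitions. The repair is precisely the paper's independence remark, which your sketch omits --- in the fully observable setting the event values are fixed at rollout start and are independent of the agent's actions, so the segmentation should be run conditionally on each event-value assignment (under which $f$ does change only at subgoals, after instantaneous settling) and the results averaged; the paper's ``positive linear combination'' step is doing this averaging. Second, your claim that matching the option objective to the within-segment objective is ``routine'' hides a discounting issue: for $\gamma<1$ the flat policy's segment problem includes the term $\gamma^k$ times the continuation value, so its optimizer need not coincide with the option's (which maximizes segment reward alone). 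This subtlety afflicts the paper's proof equally and disappears under the $\gamma=1$ setting the paper actually uses, so it does not disqualify your approach, but it deserves explicit mention rather than dismissal. With those two patches your argument goes through and is the more rigorous route.
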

\begin{proof}
By Condition~\ref{lemma:subgoals}, every subgoal has an option associated with it whose optimal policy is to go to the subgoal. By Condition~\ref{lemma:satisfaction}, the hierarchically optimal meta-policy will reach the FSA goal state $f_g$. The meta-policy can only accomplish this by going to the subgoals in a sequence that satisfies the task specification. It does this by executing a sequence of options that correspond to a satisfying sequence of subgoals and are optimal in expectation. Therefore, since $R_F(f) > 0 \; \forall f$ and $R_{SMDP}(f, f_s, s, o) = R_F(f)R_o(f_s, s)$, and since the event propositions that affect the order of subgoals necessary to satisfy the task are independent random variables, the expected cumulative reward is a positive linear combination of the expected option rewards, and since all option rewards are optimal with respect to the environment and the meta-policy is optimal over the options, our algorithm attains the optimal expected cumulative reward.
\end{proof}

\section{Experimental Implementation}\label{sec:appendix-experiments}

We discuss the implementation details of the experiments in this section. Because the setups of the domains are analogous, we discuss the delivery domain first in every section and then briefly relate how the same formulation applies to the reacher and pick-and-place domains as well. In this section, we use the simpler formulation of the main paper and not the more general formulation discussed in Appendix~\ref{sec:appendix-lof}.

\subsection{Propositions}

The delivery domain has 7 propositions plus 4 composite propositions. The subgoal propositions are $\cP_G = \{a, b, c, h\}$. Each of these propositions is associated with a single state in the environment (see Fig.~\ref{fig:appendix-discrete-domain}). The safety propositions are $\cP_S = \{o, e\}$. $o$ is the obstacle proposition. It is associated with many states -- the black squares in Fig.~\ref{fig:appendix-discrete-domain}. $e$ is the empty proposition, associated with all of the white squares in the domain. This is the default proposition for when there are no other active propositions. The event proposition is $\cP_E = \{can\}$. $can$ is the ``cancelled'' proposition, representing when one of the subgoals has been cancelled.

To simplify the FSAs and the implementation, we make an assumption that multiple propositions cannot be true at the same state. However, it is reasonable for $can$ to be true at the subgoals, and therefore we introduce 4 composite propositions, $ca = a \land can$, $cb = b \land can$, $cc = c \land can$, $ch = h \land can$. These can be counted as event propositions without affecting the operation of the algorithm.

The reacher domain has analogous propositions. The subgoals are $r, g, b, y$ and correspond to $a, b, c, h$. The environment does not contain obstacles $o$ but does have safety proposition $e$, and it also has the event proposition $can$ and the composite propositions $cr, cg, cb, cy$ for when $can$ is true at the same time that a subgoal proposition is true. Another difference is that the subgoal propositions are associated with a small spherical region instead of a single state as in the delivery domain; this is a necessity for continuous domains and unfortunately breaks one of our conditions for optimality because the subgoals are now associated with multiple states instead of a single state. However, the LOF meta-policy will still converge to a hierarchically optimal policy.

The pick-and-place domain has subgoals $r, g, b, y$ like the reacher domain, and event proposition $can$. Like the reacher domain, the pick-and-place domain's subgoals become true in a region around the goal state, breaking one of the necessary conditions for optimality. However, the LOF meta-policy still converges to a hierarchically optimal policy.

\subsection{Reward Functions}

Next, we define the reward functions of the physical environment $R_E$, safety propositions $R_S$, and FSA states $R_F$. We realize that often in reinforcement learning, the algorithm designer has no control over the reward functions of the environment. However, in our case, there are no publicly available environments such as OpenAI Gym or the DeepMind Control Suite that we know of that have a high-level FSA built-in. Therefore, anyone implementing our algorithm will likely have to implement their own high-level FSA and define the rewards associated with it.

For the delivery domain, the low-level environment reward function $R_E : \cS \times \cA \rightarrow \mathbb{R}$ is defined to be $-1 \; \forall s, a$. In other words, it is a time/distance cost.

We assign costs to the safety propositions by defining the reward function $R_S : \cP_S \rightarrow \mathbb{R}$. All of the costs are $0$ except for the obstacle cost, $R_S(o) = -1000$. Therefore, there is a very high penalty for encountering an obstacle.

We define the environment reward function $R_\cE : \cS \times \cA \rightarrow \mathbb{R}$ to be $R_\cE(s, a) = R_E(s, a) + R_S(T_P(s))$. In other words, it is the sum of $R_E$ and $R_S$. This reward function meets Condition~\ref{lemma:subgoals} for the optimal option policies to always converge to their subgoals.

Lastly, we define $R_F : \cF \rightarrow \mathbb{R}$ to be $R_F(f) = 1 \; \forall f$. Therefore the SMDP cost $R_SMDP(f, s, o) = R_o(s)$ and meets Condition~\ref{lemma:satisfaction} so that the LOF meta-policy converges to the optimal policy.

The reacher environment has analogous reward functions. The safety reward function $R_S(p) = 0 \; \forall p \in \cP_S$ because there is no obstacle proposition. Also, the physical environment reward function differs during option training and meta-policy learning. For meta-policy learning, the reward function is $R_E(s, a) = -a^\top a - 0.1$ -- a time cost and an actuation cost. During option training, we speed learning by adding the distance to the goal state as a cost, instead of a time cost: $R_E(s, a) = -a^\top a - \vert\vert s - s_g \vert \vert^2$. Although the reward functions and value functions are different, the costs are analogous and lead to good performance as seen in the results. Note that this method can't be used for Reward Machines, because it trains sub-policies for FSA states, and the subgoals for FSA states are not known ahead of time, so distance to subgoal cannot be calculated.

The pick-and-place domain has reward functions analogous to the reacher domain's.

\subsection{Algorithm for \texttt{LOF-QL}}\label{sec:appendix-lof-ql}

The \texttt{LOF-QL} baseline uses Q-learning to learn the meta-policy instead of value iteration. We therefore use ``Logical Q-Learning'' equations in place of the Logical Value Iteration equations described in Eqs.~\ref{eq:q-update} and~\ref{eq:v-update} in the main text. The algorithm is described in Alg.~\ref{alg:lof-ql}. A benefit of using Q-learning instead of value iteration is that the transition function $T_F$ of the FSA $\cT$ does not have to be explicitly known, as the algorithm samples from the transitions rather than using $T_F$ explicitly in the formula. However, as described in the main text, this comes at the expense of reduced composability, as \texttt{LOF-QL} takes around $5x$ more iterations to converge to a new meta-policy than \texttt{LOF-VI} does. Let $Q_0(f, s, o)$ be initialized to be all $0$s. The Q update formulas are given in Alg.~\ref{alg:lof-ql} lines~\ref{eq:lof-ql-update1} and~\ref{eq:lof-ql-update2}.

\begin{algorithm}[!ht]
\caption{LOF with $\epsilon$-greedy Q-learning}\label{alg:lof-ql}
\begin{algorithmic}[1]
    \STATE \textbf{Given:} \par
        Propositions $\cP$ partitioned into subgoals $\cP_G$, safety propositions $\cP_S$, and event propositions $\cP_E$ \par
        Environment MDP $\cE = (\cS, \cA, T_E, R_\cE, \gamma)$ \par
        Logical options $\cO$ with reward models $R_o(s)$ and transition models $T_o(s' \vert s)$ \par
        Liveness property $\cT = (\cF, \cP_G \cup \cP_E, T_F, R_F, f_0, f_g)$ ($T_F$ does not have to be explicitly known if it can be sampled from a simulator) \par
        Learning rate $\alpha$, exploration probability $\epsilon$ \par
        Number of training episodes $n$, episode length $m$
    \STATE \textbf{To learn:} 
        \STATE Meta-policy $\mu(f, s, o)$ along with $Q(f, s, o)$ and $V(f, s)$
    \STATE \textbf{Find a meta-policy $\mu$ over the options:}
        \STATE Initialize $Q : \cF \times \cS \times \cO \rightarrow \mathbb{R}$ and $V : \cF \times \cS \rightarrow \mathbb{R}$ to $0$
        \FOR{$k \in [1, \dots, n] $:}
        \STATE Initialize FSA state $f \leftarrow 0$, $s$ a random initial state from $\cE$
        \STATE Draw $\bar{p}_e \sim T_{P_E}()$
        \FOR{$j \in [1, \dots, m]$:}
        \STATE With probability $\epsilon$ let $o$ be a random option; otherwise, $o \leftarrow \argmax\limits_{o' \in \cO} Q(f, s, o')$
        \STATE $s' \sim T_o(s)$
        \STATE $f' \sim T_F(T_{P_G}(s'), \bar{p}_e, f)$
        \STATE $Q_k(f, s, o) \leftarrow Q_{k-1}(f, s, o) + \alpha \big( R_F(f)R_o(s) + \gamma V(f', s') - Q_{k-1}(f, s, o) \big)$ \label{eq:lof-ql-update1}
        \STATE $V_k(f, s) \leftarrow \max\limits_{o' \in \cO} Q_k(f, s, o')$ \label{eq:lof-ql-update2}
        \STATE $f \leftarrow f'$
        \ENDFOR
        \ENDFOR
    \STATE $\mu(f, s, o) = \argmax\limits_{o \in \cO} Q(f, s, o)$
    \STATE \textbf{Return: } Options $\cO$, meta-policy $\mu(f, s, o)$ and Q- and value functions $Q(f, s, o), V(f, s)$
\end{algorithmic}
\end{algorithm}

\subsection{Comparison of LOF and Reward Machines}\label{sec:appendix-lof-vs-rm}

Figs.~\ref{fig:lof-vs-rm-1},~\ref{fig:lof-vs-rm-rm},~\ref{fig:lof-vs-rm-3}, and~\ref{fig:lof-vs-rm-4} give a visual overview of how LOF and Reward Machines work, and illustrate how they differ.

\begin{figure*}[!th]
\centering
\begin{subfigure}[t]{.35\textwidth}
  \centering
  \includegraphics[width=0.6\textwidth]{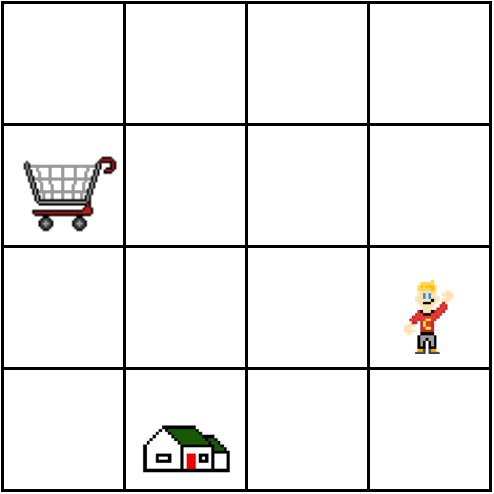}
  \caption{Environment MDP $\cE$.}
  \label{fig:lof-vs-rm-environment}
\end{subfigure} \hfill
\begin{subfigure}[t]{.6\textwidth}
  \centering
  \includegraphics[width=\textwidth]{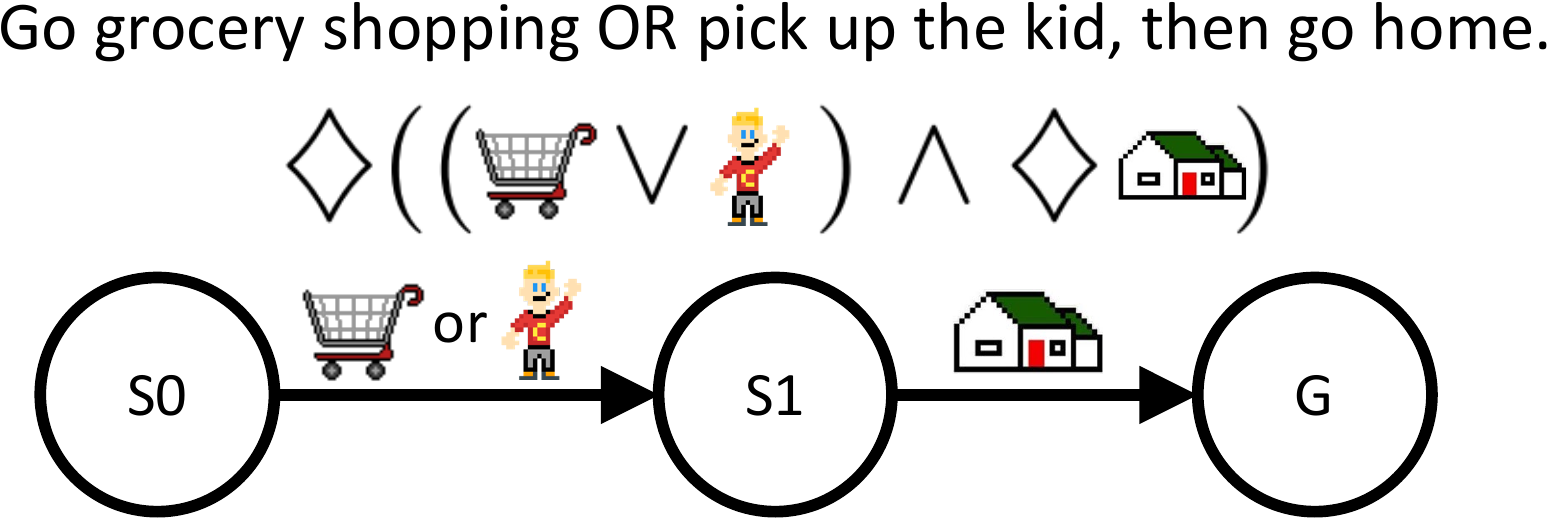}
  \caption{Liveness property $\cT$. The natural language rule can be represented as an LTL formula which can be translated into an FSA.}
  \label{fig:lof-vs-rm-fsa}
 \end{subfigure}
\caption{LOF and \texttt{RM} both require an environment MDP $\cE$ and an automaton $\cT$ that specifies a task.}
\label{fig:lof-vs-rm-1}
\end{figure*}

\begin{figure*}[!th]
  \centering
  \includegraphics[width=0.8\textwidth]{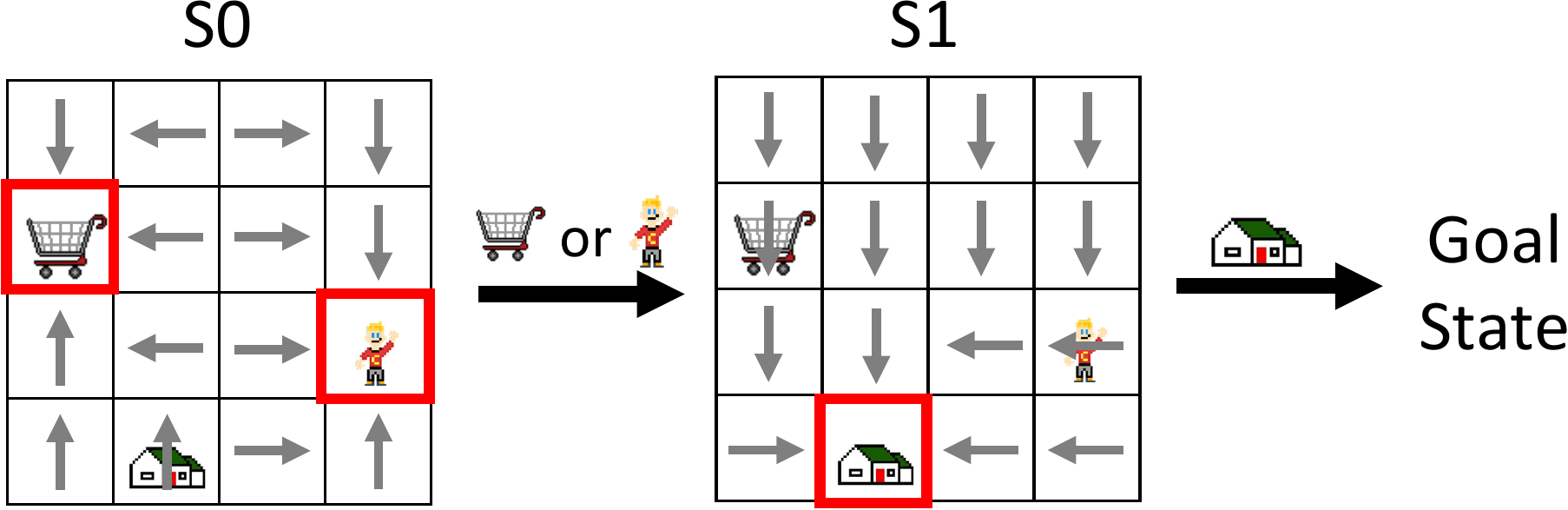}
  \caption{In \texttt{RM}, sub-policies are learned for each state of the automaton. In this case, in state $S0$, a sub-policy is learned that goes either to the shopping cart of the kid, whichever is closer. In state $S1$, the sub-policy goes to the house.}
  \label{fig:lof-vs-rm-rm}
\end{figure*}

\begin{figure*}[!th]
\centering
\begin{subfigure}[t]{\textwidth}
  \centering
  \includegraphics[width=0.85\textwidth]{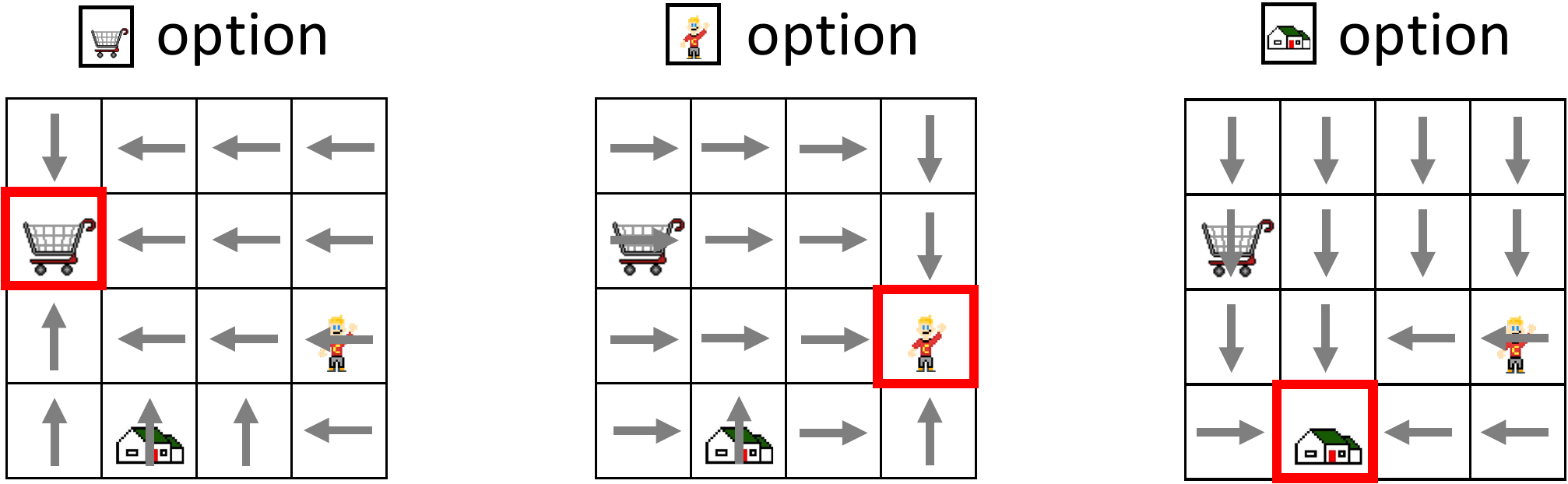}
  \caption{Step 1 of LOF: Learn a logical option for each subgoal.}
  \label{fig:lof-vs-rm-lof1}
\end{subfigure}
~
\begin{subfigure}[t]{\textwidth}
  \centering
  \includegraphics[width=0.85\textwidth]{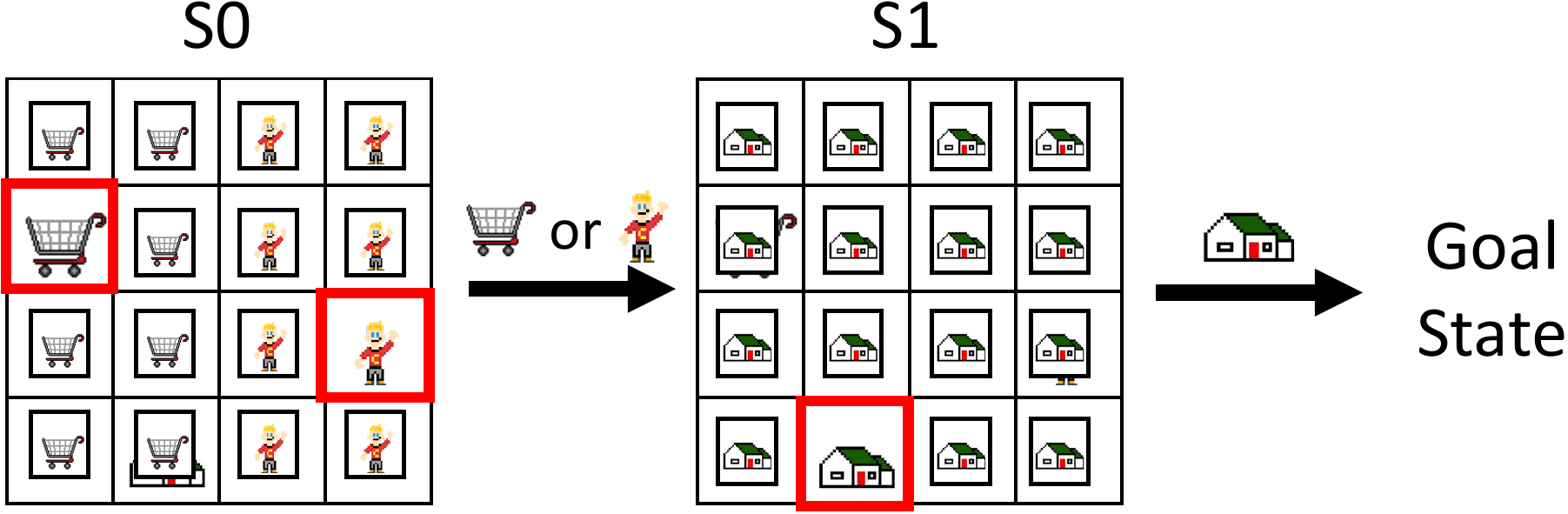}
  \caption{Step 2 of LOF: Use Logical Value Iteration to find a meta-policy that satisfies the liveness property. In this image, the boxed subgoals indicate that the corresponding option is the optimal option to take from that low-level state. The policy ends up being the same as \texttt{RM}'s policy -- in state $S0$, the optimal meta-policy chooses the ``grocery shoppping'' option if the grocery cart is closer and the ``pick up kid'' option if the kid is closer. In the state $S1$, the optimal meta-policy is to always choose the ``home'' option.}
  \label{fig:lof-vs-rm-lof2}
 \end{subfigure}
\caption{LOF has two steps. In (a) the first step, logical options are learned for each subgoal. In (b) the second step, a meta-policy is found using Logical Value Iteration.}
\label{fig:lof-vs-rm-3}
\end{figure*}

\begin{figure*}[!th]
 \begin{subfigure}[t]{\textwidth}
  \centering
  \includegraphics[width=0.5\textwidth]{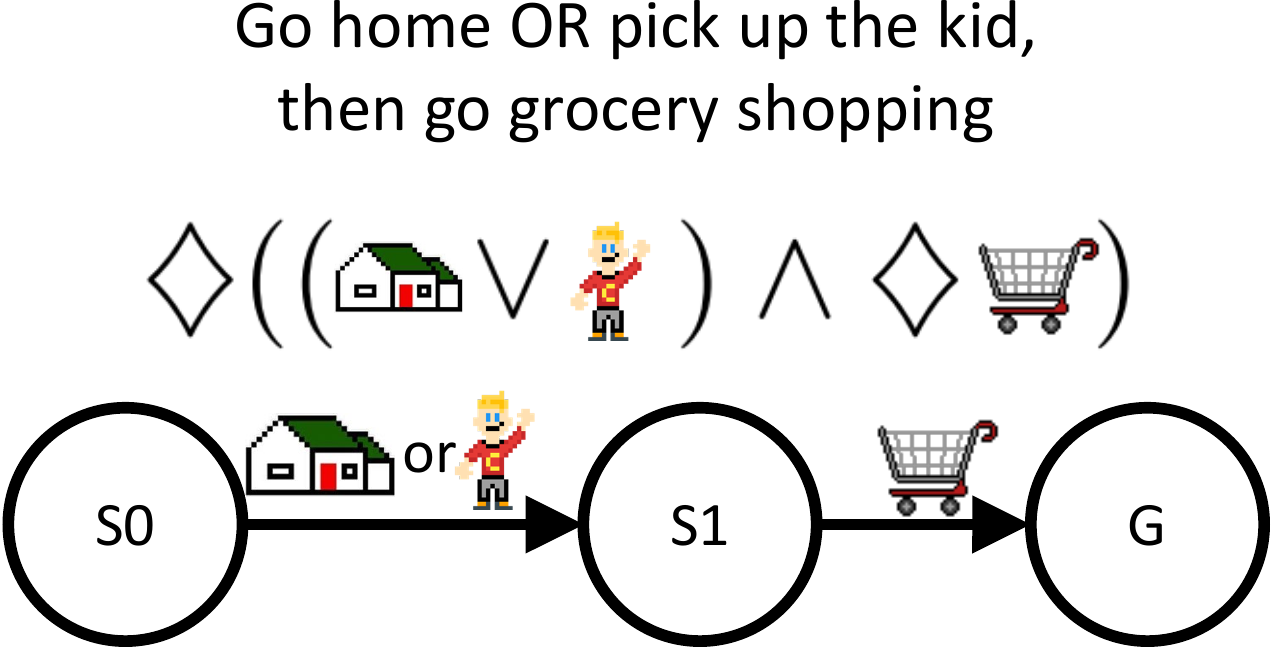}
  \caption{LOF can easily solve this new liveness property without training new options.}
  \label{fig:lof-vs-rm-new-task}
\end{subfigure}
\begin{subfigure}[t]{\textwidth}
  \centering
  \includegraphics[width=0.85\textwidth]{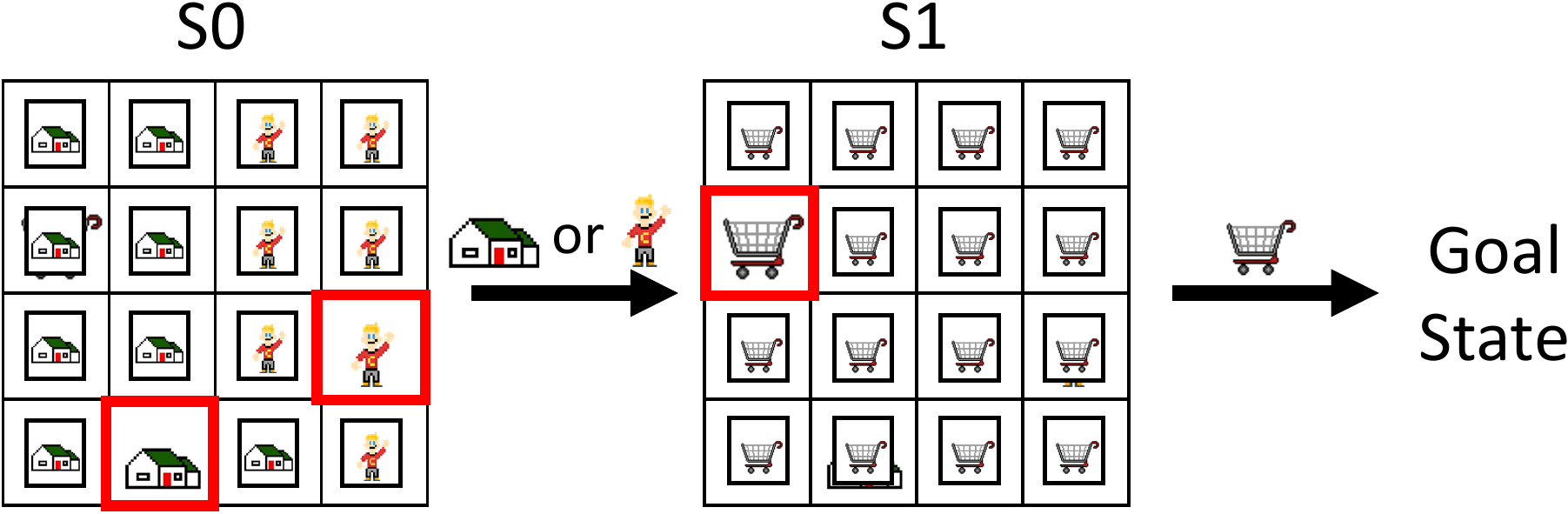}
  \caption{Logical Value Iteration can be used to find a meta-policy on the new task without the need to retrain the logical options. A new meta-policy can be found in 10-50 iterations. The new policy finds that in state $S0$, ``home'' option is optimal if the agent is closer to ``home'', and the ``kid'' option is optimal if the agent is closer to ``kid''. In state $S1$, the ``grocery shopping'' option is optimal everywhere.}
  \label{fig:lof-vs-rm-compose}
 \end{subfigure}
\caption{What distinguishes LOF from \texttt{RM} is that the logical options of LOF can be easily composed to solve new tasks. In this example, the new task is to go home or pick up the kid, then go grocery shopping. Logical Value Iteration can find a new meta-policy in 10-50 iterations without needing to relearn the options.}
\label{fig:lof-vs-rm-4}
\end{figure*}

\subsection{Tasks}\label{appendix:tasks}

We test the environments on four tasks, a ``sequential'' task (Fig.~\ref{fig:fsa-sequential}), an ``IF'' task (Fig.~\ref{fig:fsa-if}), an ``OR'' task (Fig.~\ref{fig:fsa-or}), and a ``composite'' task (Fig.~\ref{fig:fsa-composite}). The reacher domain has the same tasks, expect $r, g, b, y$ replace $a, b, c, h$, and there are no obstacles $o$. Note that in the LTL formulae, $\Always !o$ is the safety property $\phi_{safety}$; the preceding part of the formula is the liveness property $\phi_{liveness}$ used to construct the FSA.

\begin{figure}[!htb]
    \centering
    \resizebox{.95\linewidth}{!}{\begin{tikzpicture}[->,>=stealth',shorten >=1pt,auto,node distance=2.5cm, semithick,
initial text={}]
\tikzstyle{every state}=[text=black]

\node[state,initial] [initial where=left] (s0) { $init$ };
\node[state] [right of=s0] (s1) { $s_{ 1 }$ };
\node[state] [right of=s1] (s2) { $s_2$ };
\node[state] [right of=s2] (s3) { $s_3$ };
\node[state,accepting] [right of=s3] (s4) { $ goal $ };

\path[->]
(s0) edge [font=\scriptsize, right, above] node { $ a $ } (s1)
(s1) edge [font=\scriptsize, right, above] node { $ b $ } (s2)
(s2) edge [font=\scriptsize, right, above] node { $ c $ } (s3)
(s3) edge [font=\scriptsize, right, above] node { $ h $ } (s4)
;
\end{tikzpicture}}
    \caption{FSA for the sequential task. The LTL formula is $\Event(a \land \Event(b \land \Event(c \land \Event h))) \land \Always ! o$. The natural language interpretation is ``Deliver package $a$, then $b$, then $c$, and then return home $h$. And always avoid obstacles $o$''.}
    \label{fig:fsa-sequential}
\end{figure}
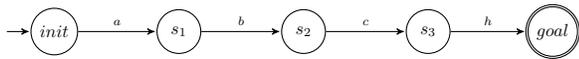

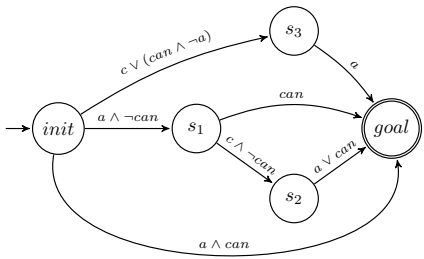
\begin{figure}[!htb]
    \centering
    \resizebox{.7\linewidth}{!}{\begin{tikzpicture}[->,>=stealth',shorten >=1pt,auto,node distance=2.5cm, semithick,
initial text={}]
\tikzstyle{every state}=[text=black]

\node[state,initial] [initial where=left] (s0) { $init$ };
\node[state] [right of=s0] (s1) { $s_{ 1 }$ };
\node[state] [below right of=s1, yshift=0.5cm] (s2) { $s_{ 2 }$ };
\node[state] [above right of=s1] (s3) { $s_{ 3 }$ };
\node[state,accepting] [below right of=s3] (s4) { $goal$ };

\path[->]
(s0) edge [font=\scriptsize,above] node { $ a \land \neg can $ } (s1)
(s0) edge [font=\scriptsize,sloped,above,bend left=10] node { $c \lor (can \land \neg a)$ } (s3)
(s0) edge [font=\scriptsize,sloped,above,bend right=100] node { $a \land can$  } (s4)

(s1) edge [font=\scriptsize,sloped,above] node { $c \land \neg can$ } (s2)
(s1) edge [font=\scriptsize,sloped,above, bend left=20] node { $can$ } (s4)

(s2) edge [font=\scriptsize,sloped,above] node { $a \lor can$ } (s4)

(s3) edge [font=\scriptsize,sloped,above, bend left=10] node { $ a $ } (s4)
;
\end{tikzpicture}}
    \caption{FSA for the IF task. The LTL formula is $(\Event (c \land \Event a) \land \Always ! can) \lor (\Event a \land \Event can) \land \Always ! o$. The natural language interpretation is ``Deliver package $c$, and then $a$, unless $a$ gets cancelled. And always avoid obstacles $o$''.}
    \label{fig:fsa-if}
\end{figure}

\begin{figure}[!htb]
    \centering
    \resizebox{.6\linewidth}{!}{\begin{tikzpicture}[->,>=stealth',shorten >=1pt,auto,node distance=2.5cm, semithick,
initial text={}]
\tikzstyle{every state}=[text=black]

\node[state,initial] [initial where=left] (s0) { $init$ };
\node[state] [right of=s0] (s1) { $s_{ 1 }$ };
\node[state,accepting] [right of=s1] (s2) { $ goal $ };

\path[->]
(s0) edge [font=\scriptsize, right, above] node { $ a \lor b$ } (s1)
(s1) edge [font=\scriptsize, right, above] node { $ c $ } (s2)
;
\end{tikzpicture}}
    \caption{FSA for the OR task. The LTL formula is $\Event ((a \lor b) \land \Event c) \land \Always ! o$. The natural language interpretation is ``Deliver package $a$ or $b$, then $c$, and always avoid obstacles $o$''.}
    \label{fig:fsa-or}
\end{figure}

\begin{figure}[!htb]
    \centering
    \resizebox{.95\linewidth}{!}{\begin{tikzpicture}[->,>=stealth',shorten >=1pt,auto,node distance=2.5cm, semithick,
initial text={}]
\tikzstyle{every state}=[text=black]

\node[state,initial] [initial where=left] (s0) { $init$ };
\node[state] [above right of=s0] (s1) { $s_{ 1 }$ };
\node[state] [right of=s1] (s2) { $s_{ 2 }$ };
\node[state] [right of=s2] (s3) { $s_{ 3 }$ };
\node[state] [below right of=s0, xshift=1cm] (s4) { $s_{ 4 }$ };
\node[state] [right of=s4] (s5) { $s_{ 5 }$ };
\node[state,accepting] [below right of=s3] (s6) { $goal$ };

\path[->]
(s0) edge [font=\scriptsize,sloped,above, bend left=10] node { $ (a \lor b) \land \neg can $ } (s1)
(s0) edge [font=\scriptsize,sloped,below,bend right=20] node { $can \land \neg a \land \neg b$ } (s4)
(s0) edge [font=\scriptsize,sloped,below, bend left=10] node { $ (a \lor b) \land can $ } (s5)

(s1) edge [font=\scriptsize,sloped,above] node { $h \land \neg can$ } (s2)
(s1) edge [font=\scriptsize,sloped,above, bend right=10] node { $c \lor (can \land \neg h)$ } (s5)
(s1) edge [font=\scriptsize,sloped,above, bend right=20] node { $h \land can$ } (s6)

(s2) edge [font=\scriptsize,sloped,above] node { $c \land \neg can$ } (s3)
(s2) edge [font=\scriptsize,sloped,above, bend right=10] node { $can$ } (s6)

(s3) edge [font=\scriptsize,sloped,above, bend left=10] node { $ h \lor can $ } (s6)

(s4) edge [font=\scriptsize,sloped,above] node { $a \lor b$ } (s5)

(s5) edge [font=\scriptsize,sloped,above, bend right=20] node { $h$ } (s6)
;
\end{tikzpicture}}
    \caption{FSA for the composite task. The LTL formula is $(\Event((a \lor b) \land \Event(c \land \Event h)) \land \Always ! can) \lor (\Event((a \lor b) \land \Event h) \land \Event can) \land \Always ! o$. The natural language interpretation is ``Deliver package $a$ or $b$, and then $c$, unless $c$ gets cancelled, and then return to home $h$. And always avoid obstacles''.}
    \label{fig:fsa-composite}
\end{figure}

\pagebreak

\subsection{Full Experimental Results}\label{sec:appendix-results}

For the satisfaction experiments for the delivery domain, 10 policies were trained for each task and for each baseline. Training was done for 1600 episodes, with 100 steps per episode. Every 2000 training steps, the policies were tested on the domain and the returns recorded. For this discrete domain, we know the minimum and maximum possible returns for each task, and we normalized the returns using these minimum and maximum returns. The error bars are the standard deviation of the returns over the 10 policies' rollouts.

For the satisfaction experiments for the reacher domain, a single policy was trained for each task and for each baseline. The baselines were trained for 900 epochs, with 50 steps per epoch. Every 2500 training steps, each policy was tested by doing 10 rollouts and recording the returns. For the \texttt{RM} baseline, training was for 1000 epochs with 800 steps per epoch, and the policy was tested every 8000 training steps. Because we don't know the minimum and maximum rewards for each task, we did not normalize the returns. The error bars are the standard deviation over the 10 rollouts for each baseline.

For the composability experiments, a set of options was trained once, and then meta-policing training using \texttt{LOF-VI}, \texttt{LOF-QL}, and \texttt{Greedy} was done for each task. Returns were recorded at every training step by rolling out each baseline 10 times. The error bars are the standard deviations on the 10 rollouts.

For the pick-and-place domain, 1 policy was trained for the satisfaction experiments, and experimental results were evaluated over 2 rollouts. Training was done for 7500 epochs with 1000 steps per epoch. Every 250,000 training steps, the policy was tested by doing 2 rollouts and recording the returns. For the composability experiments, returns were recorded by rolling out each baseline 2 times. The \texttt{RM} baseline was trained over 10000 epochs with 1000 steps per epoch.

Code and videos of the domains and tasks are in the supplement.

\begin{figure*}[!th]
\centering
\begin{subfigure}[b]{0.3\textwidth}
  \centering
  \includegraphics[width=3cm]{figures/delivery_environment-cropped.pdf}
  \caption{Delivery domain.}
  \label{fig:appendix-discrete-domain}
\end{subfigure} \hfill
\begin{subfigure}[b]{.3\textwidth}
  \centering
  \includegraphics[width=4cm]{figures/discrete/satisfaction/results_averaged_over_tasks.png}
  \caption{Averaged.}
  \label{fig:appendix-discrete-satisfaction-average}
 \end{subfigure} \hfill
\begin{subfigure}[b]{.3\textwidth}
  \centering
  \includegraphics[width=4cm]{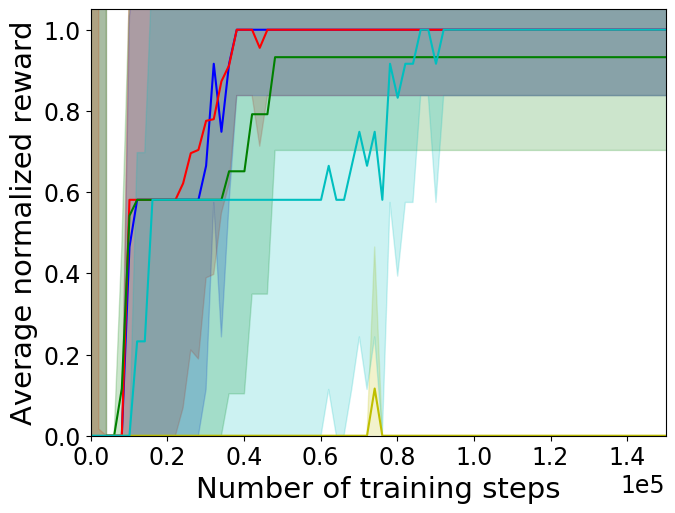}
  \caption{Composite.}
  \label{fig:appendix-discrete-satisfaction-composite}
\end{subfigure} \hfill
\begin{subfigure}[b]{.3\textwidth}
  \centering
  \includegraphics[width=4cm]{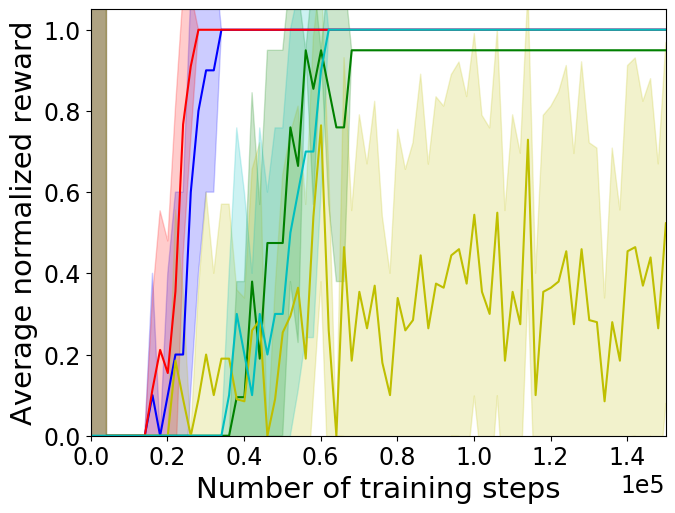}
  \caption{OR.}
  \label{fig:appendix-discrete-satisfaction-or}
\end{subfigure} \hfill
\begin{subfigure}[b]{.3\textwidth}
  \centering
  \includegraphics[width=4cm]{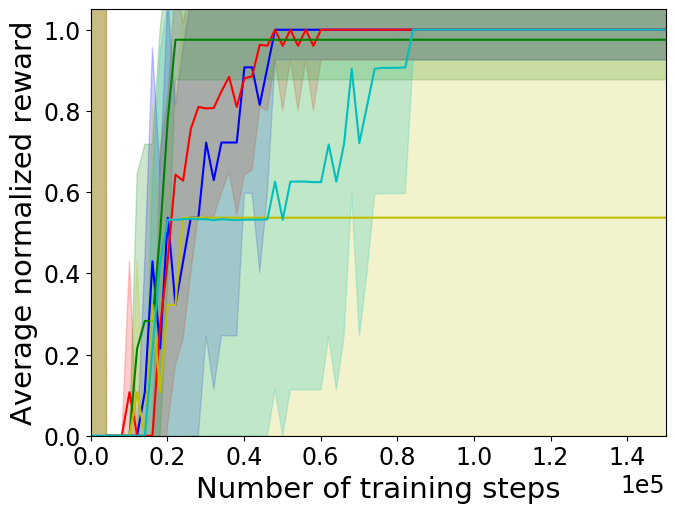}
  \caption{IF.}
  \label{fig:appendix-discrete-satisfaction-if}
\end{subfigure} \hfill
\begin{subfigure}[b]{.3\textwidth}
  \centering
  \includegraphics[width=4cm]{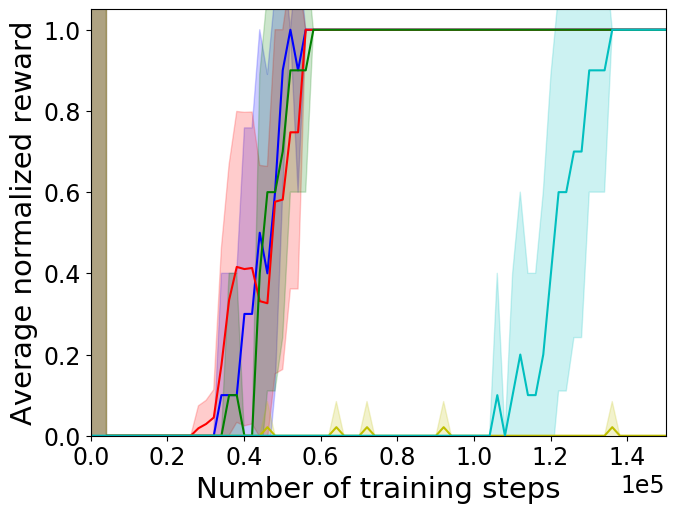}
  \caption{Sequential.}
  \label{fig:appendix-discrete-satisfaction-sequential}
\end{subfigure}

\begin{subfigure}[b]{\textwidth}
  \centering
  \includegraphics[height=0.6cm]{figures/discrete/satisfaction/legend.png}
  \label{fig:appendix-discrete-satisfaction-legend}
\end{subfigure}

\caption{All satisfaction experiments on the delivery domain. Notice how for the composite and OR tasks (Figs.~\ref{fig:appendix-discrete-satisfaction-composite} and~\ref{fig:appendix-discrete-satisfaction-or}), the \texttt{Greedy} baseline plateaus before \texttt{LOF-VI} and \texttt{LOF-QL}. This is because \texttt{Greedy} chooses a suboptimal path through the FSA, whereas \texttt{LOF-VI} and \texttt{LOF-QL} find an optimal path. Also, notice that \texttt{RM} takes many more training steps to achieve the optimal cumulative reward. This is because for \texttt{RM}, the only reward signal is from reaching the goal state. It takes a long time for the agent to learn an optimal policy from such a sparse reward signal. This is particularly evident for the sequential task (Fig.~\ref{fig:appendix-discrete-satisfaction-sequential}), which requires the agent to take a longer sequence of actions/FSA states before reaching the goal. The options-based algorithms train much faster because when training the options, the agent receives a reward for reaching each subgoal, and therefore the reward signal is much richer.}
\label{fig:appendix-discrete-satisfaction-experiments}
\end{figure*}

\begin{figure*}[!th]
\centering
\begin{subfigure}[b]{.3\textwidth}
  \centering
  \includegraphics[width=4cm]{figures/reacher_environment-cropped.pdf}
  \caption{Reacher domain.}
  \label{fig:appendix-continuous-domain}
\end{subfigure} \hfill
\begin{subfigure}[b]{.3\textwidth}
  \centering
  \includegraphics[width=4cm]{figures/continuous/satisfaction/results_averaged_over_tasks.png}
  \caption{Averaged.}
  \label{fig:appendix-continuous-satisfaction-average}
 \end{subfigure} \hfill
\begin{subfigure}[b]{.3\textwidth}
  \centering
  \includegraphics[width=4cm]{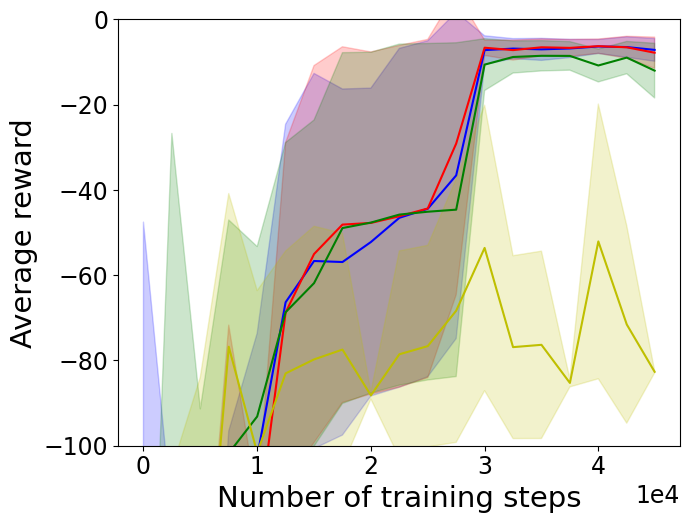}
  \caption{Composite.}
  \label{fig:appendix-continuous-satisfaction-composite}
\end{subfigure} \hfill
\begin{subfigure}[b]{.3\textwidth}
  \centering
  \includegraphics[width=4cm]{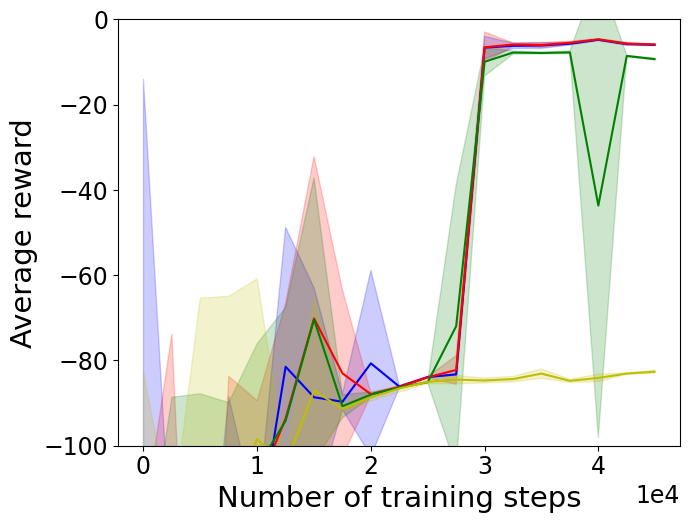}
  \caption{OR.}
  \label{fig:appendix-continuous-satisfaction-or}
\end{subfigure} \hfill
\begin{subfigure}[b]{.3\textwidth}
  \centering
  \includegraphics[width=4cm]{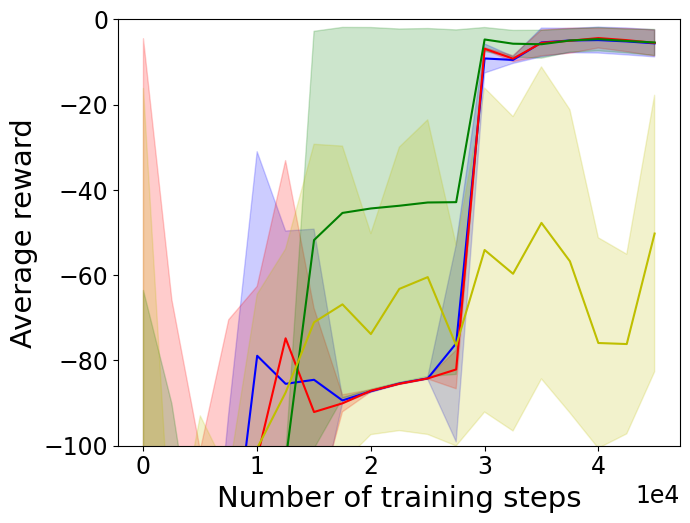}
  \caption{IF.}
  \label{fig:appendix-continuous-satisfaction-if}
\end{subfigure} \hfill
\begin{subfigure}[b]{.3\textwidth}
  \centering
  \includegraphics[width=4cm]{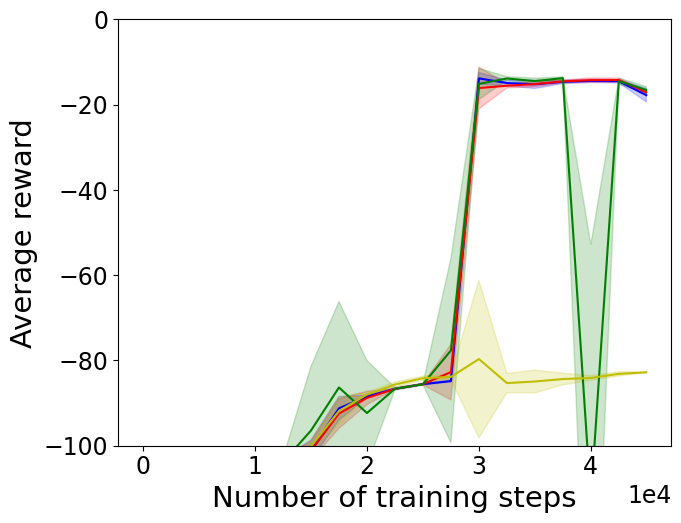}
  \caption{Sequential.}
  \label{fig:appendix-continuous-satisfaction-sequential}
\end{subfigure}

\begin{subfigure}[b]{\textwidth}
  \centering
  \includegraphics[height=0.6cm]{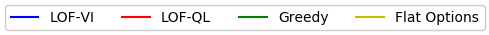}
  \label{fig:appendix-continuous-satisfaction-legend}
\end{subfigure}
\caption{Satisfaction experiments for the reacher domain, without \texttt{RM} results. The results are equivalent to the results on the delivery domain.}
\label{fig:appendix-continuous-satifaction-experiments}
\end{figure*}

\begin{figure*}[!th]
\centering
\begin{subfigure}[b]{.3\textwidth}
  \centering
  \includegraphics[width=4cm]{figures/reacher_environment-cropped.pdf}
  \caption{Reacher domain.}
  \label{fig:appendix-continuous-domain-with-rm}
\end{subfigure} \hfill
\begin{subfigure}[b]{.3\textwidth}
  \centering
  \includegraphics[width=4cm]{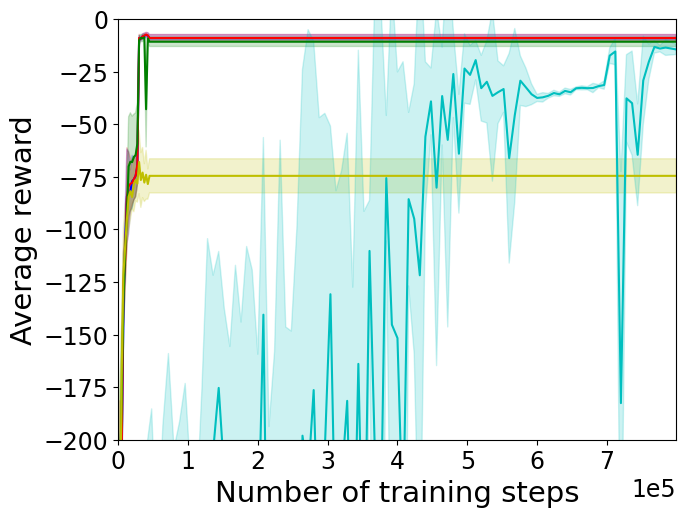}
  \caption{Averaged.}
  \label{fig:appendix-continuous-satisfaction-average-with-rm}
 \end{subfigure} \hfill
\begin{subfigure}[b]{.3\textwidth}
  \centering
  \includegraphics[width=4cm]{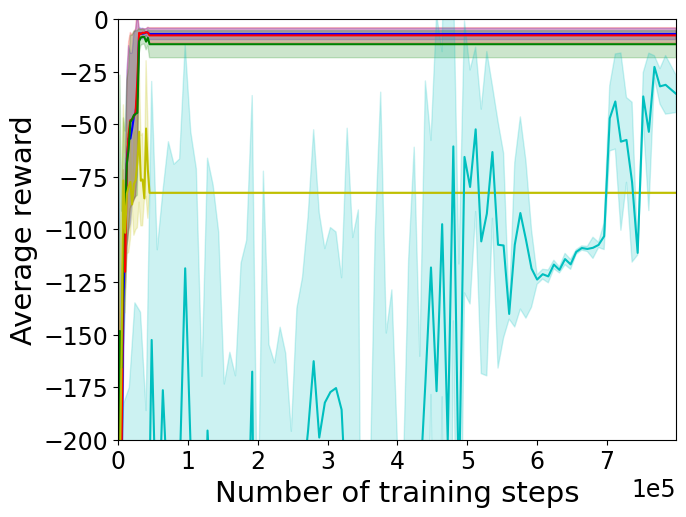}
  \caption{Composite.}
  \label{fig:appendix-continuous-satisfaction-composite-with-rm}
\end{subfigure} \hfill
\begin{subfigure}[b]{.3\textwidth}
  \centering
  \includegraphics[width=4cm]{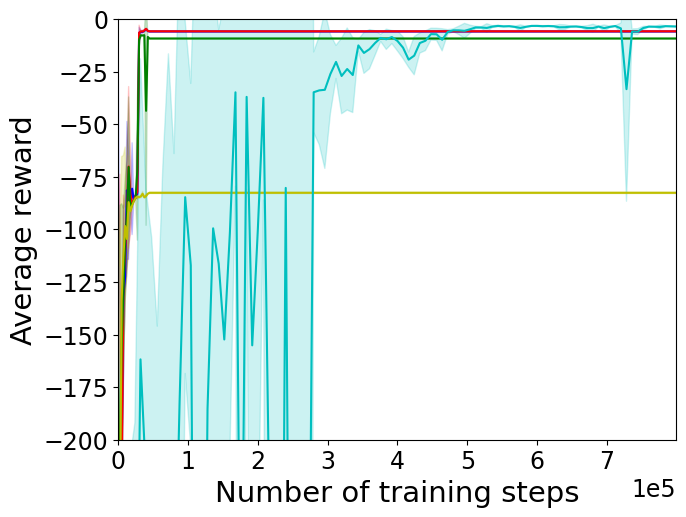}
  \caption{OR.}
  \label{fig:appendix-continuous-satisfaction-or-with-rm}
\end{subfigure} \hfill
\begin{subfigure}[b]{.3\textwidth}
  \centering
  \includegraphics[width=4cm]{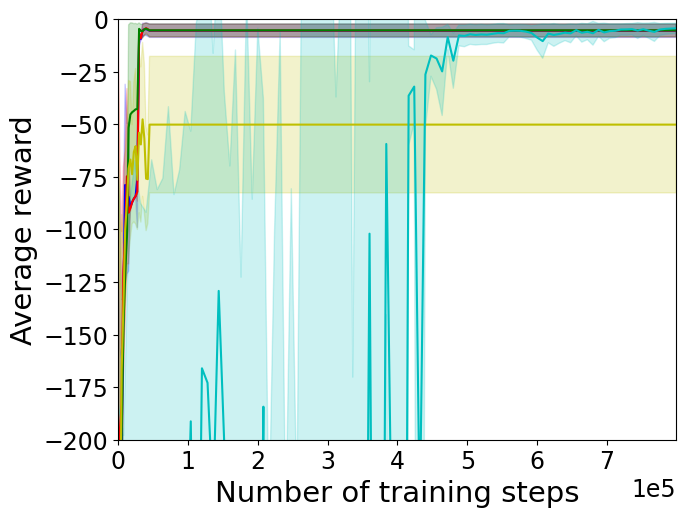}
  \caption{IF.}
  \label{fig:appendix-continuous-satisfaction-if-with-rm}
\end{subfigure} \hfill
\begin{subfigure}[b]{.3\textwidth}
  \centering
  \includegraphics[width=4cm]{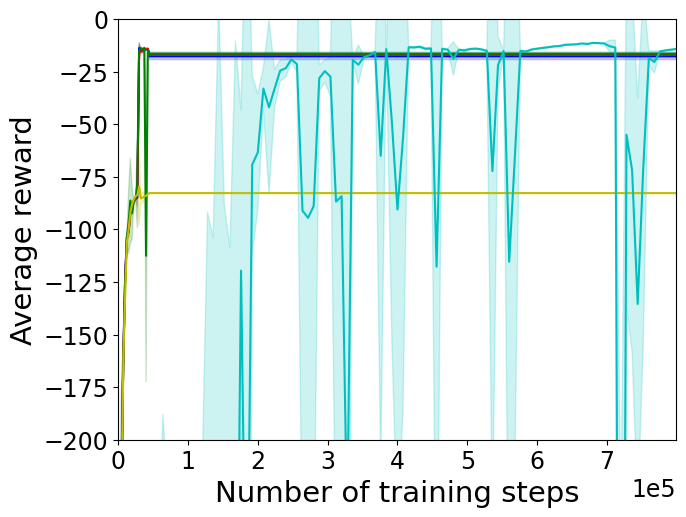}
  \caption{Sequential.}
  \label{fig:appendix-continuous-satisfaction-sequential-with-rm}
\end{subfigure}

\begin{subfigure}[b]{\textwidth}
  \centering
  \includegraphics[height=0.6cm]{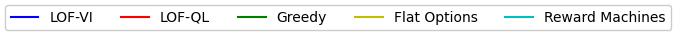}
  \label{fig:appendix-continuous-satisfaction-legend-with-rm}
\end{subfigure}
\caption{Satisfaction experiments for the reacher domain, including \texttt{RM} results. \texttt{RM} takes significantly more training steps to train than the other baselines, although it eventually reaches and surpasses the cumulative reward of the other baselines. This is because for the continuous domain, we violate some of the conditions required for optimality when using the Logical Options Framework -- in particular, the condition that each subgoal is associated with a single state. In a continuous environment, this condition is impossible to meet, and therefore we made the subgoals small spherical regions, and we only made the subgoals associated with specific Cartesian coordinates and not velocities (which are also in the state space). Meanwhile, the optimality conditions of \texttt{RM} are looser and were not violated, which is why it achieves a higher final cumulative reward.}
\label{fig:appendix-continuous-satifaction-experiments-with-rm}
\end{figure*}

\begin{figure*}[!th]
\centering
\begin{subfigure}[b]{.3\textwidth}
  \centering
  \includegraphics[width=3.4cm]{figures/arm_domain-cropped.pdf}
  \caption{Reacher domain.}
  \label{fig:appendix-arm-domain}
\end{subfigure} \hfill
\begin{subfigure}[b]{.3\textwidth}
  \centering
  \includegraphics[width=4cm]{figures/arm/satisfaction/results_averaged_over_tasks.png}
  \caption{Averaged.}
  \label{fig:appendix-arm-satisfaction-average}
 \end{subfigure} \hfill
\begin{subfigure}[b]{.3\textwidth}
  \centering
  \includegraphics[width=4cm]{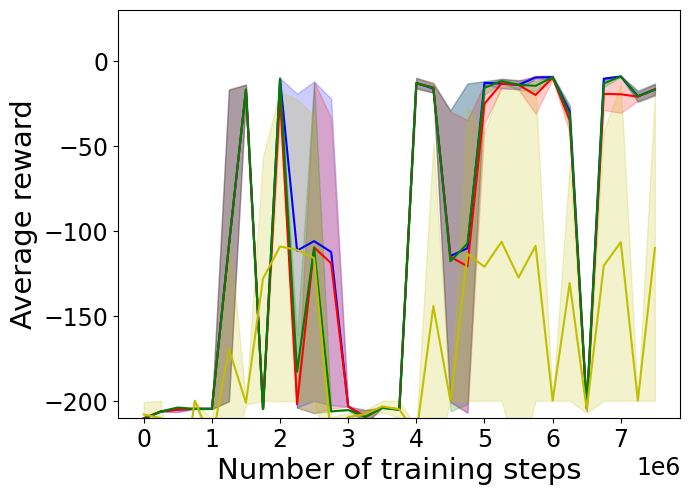}
  \caption{Composite.}
  \label{fig:appendix-arm-satisfaction-composite}
\end{subfigure} \hfill
\begin{subfigure}[b]{.3\textwidth}
  \centering
  \includegraphics[width=4cm]{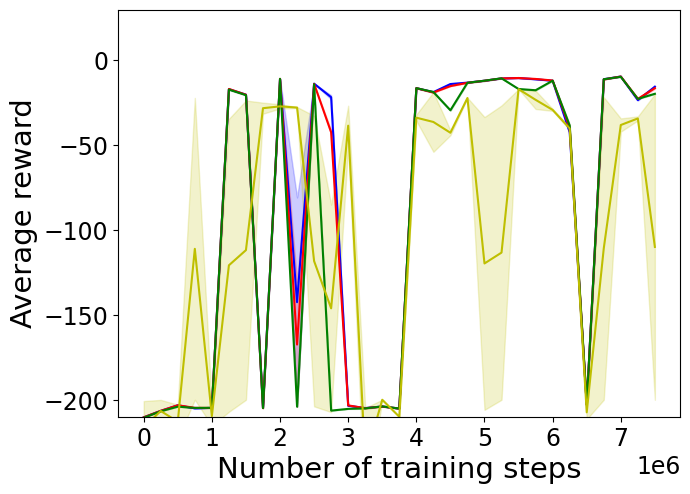}
  \caption{OR.}
  \label{fig:appendix-arm-satisfaction-or}
\end{subfigure} \hfill
\begin{subfigure}[b]{.3\textwidth}
  \centering
  \includegraphics[width=4cm]{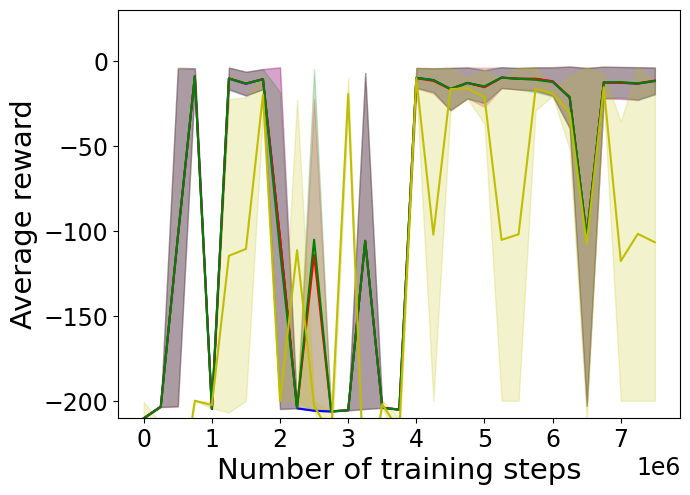}
  \caption{IF.}
  \label{fig:appendix-arm-satisfaction-if}
\end{subfigure} \hfill
\begin{subfigure}[b]{.3\textwidth}
  \centering
  \includegraphics[width=4cm]{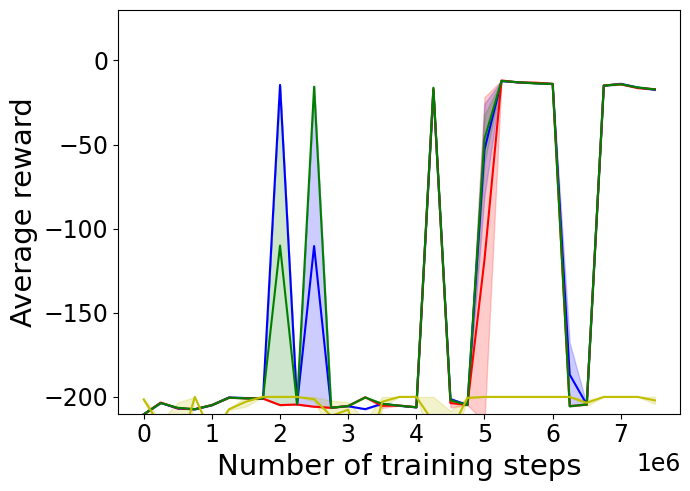}
  \caption{Sequential.}
  \label{fig:appendix-arm-satisfaction-sequential}
\end{subfigure}

\begin{subfigure}[b]{\textwidth}
  \centering
  \includegraphics[height=0.6cm]{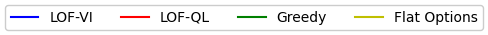}
  \label{fig:appendix-arm-satisfaction-legend}
\end{subfigure}
\caption{Satisfaction experiments for the pick-and-place domain, without \texttt{RM} results. The results are equivalent to the results on the delivery and reacher domains.}
\label{fig:appendix-arm-satifaction-experiments}
\end{figure*}

\begin{figure*}[!th]
\centering
\begin{subfigure}[b]{.3\textwidth}
  \centering
  \includegraphics[width=3.4cm]{figures/arm_domain-cropped.pdf}
  \caption{Reacher domain.}
  \label{fig:appendix-arm-domain-with-rm}
\end{subfigure} \hfill
\begin{subfigure}[b]{.3\textwidth}
  \centering
  \includegraphics[width=4cm]{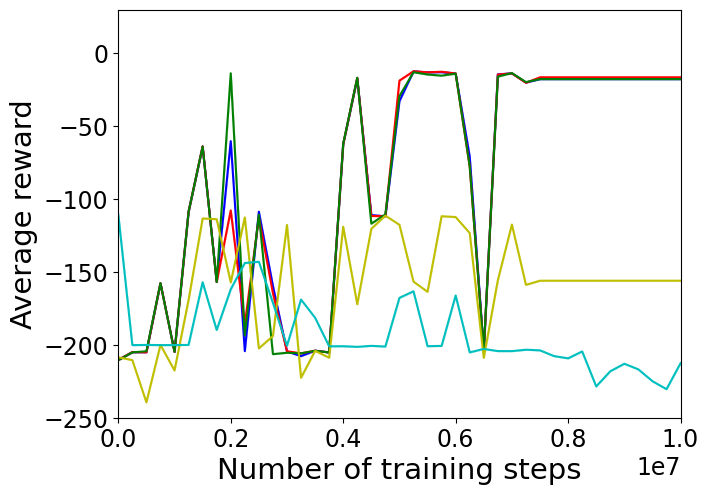}
  \caption{Averaged.}
  \label{fig:appendix-arm-satisfaction-average-with-rm}
 \end{subfigure} \hfill
\begin{subfigure}[b]{.3\textwidth}
  \centering
  \includegraphics[width=4cm]{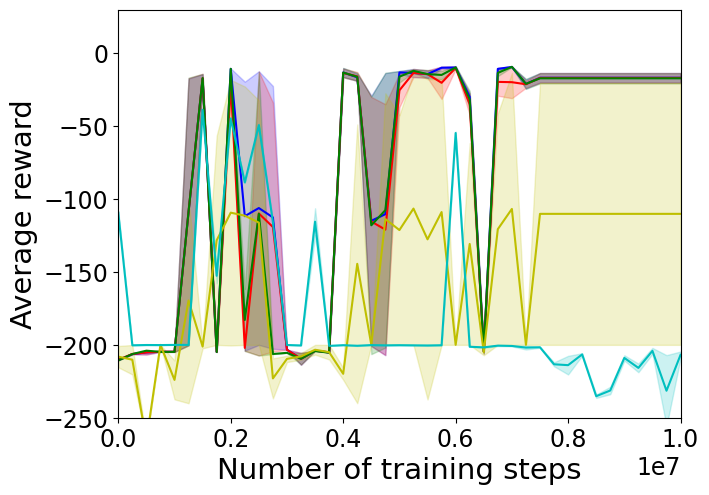}
  \caption{Composite.}
  \label{fig:appendix-arm-satisfaction-composite-with-rm}
\end{subfigure} \hfill
\begin{subfigure}[b]{.3\textwidth}
  \centering
  \includegraphics[width=4cm]{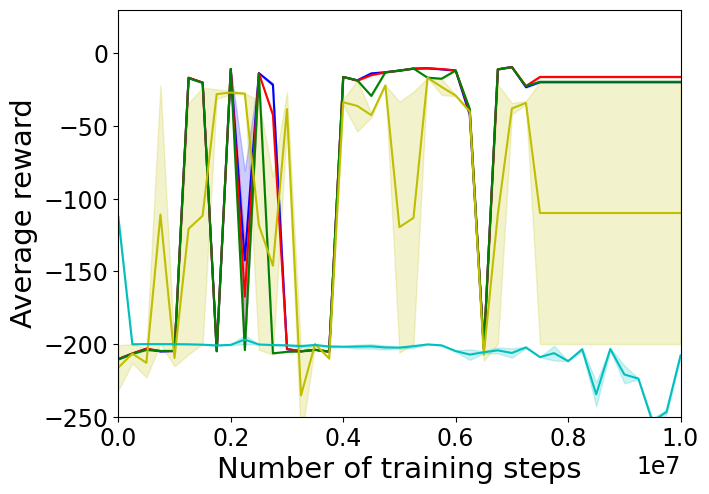}
  \caption{OR.}
  \label{fig:appendix-arm-satisfaction-or-with-rm}
\end{subfigure} \hfill
\begin{subfigure}[b]{.3\textwidth}
  \centering
  \includegraphics[width=4cm]{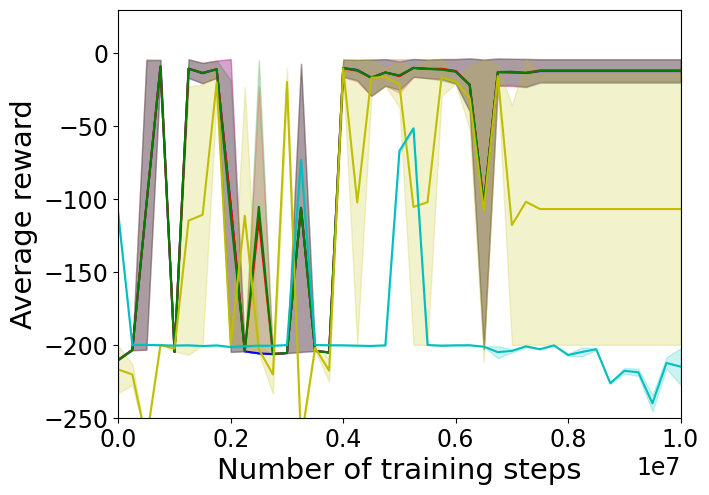}
  \caption{IF.}
  \label{fig:appendix-arm-satisfaction-if-with-rm}
\end{subfigure} \hfill
\begin{subfigure}[b]{.3\textwidth}
  \centering
  \includegraphics[width=4cm]{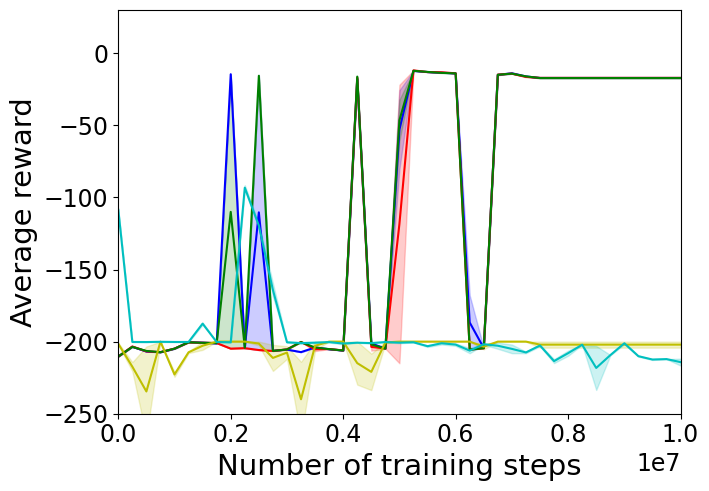}
  \caption{Sequential.}
  \label{fig:appendix-arm-satisfaction-sequential-with-rm}
\end{subfigure}

\begin{subfigure}[b]{\textwidth}
  \centering
  \includegraphics[height=0.6cm]{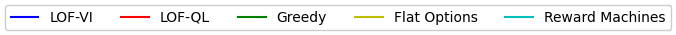}
  \label{fig:appendix-arm-satisfaction-legend-with-rm}
\end{subfigure}
\caption{Satisfaction experiments for the pick-and-place domain, including \texttt{RM} results. For the pick-and-place domain, \texttt{RM} did not converge to a solution within the training time allotted for it (10 million training steps).}
\label{fig:appendix-arm-satifaction-experiments-with-rm}
\end{figure*}

\begin{figure*}[!th]
\centering
\begin{subfigure}[b]{0.3\textwidth}
  \centering
  \includegraphics[width=3cm]{figures/delivery_environment-cropped.pdf}
  \caption{Delivery domain.}
  \label{fig:appendix-discrete-domain-composability}
\end{subfigure} \hfill
\begin{subfigure}[b]{.3\textwidth}
  \centering
  \includegraphics[width=4cm]{figures/discrete/composability/results_averaged_over_tasks.png}
  \caption{Averaged.}
  \label{fig:appendix-discrete-composability-average}
 \end{subfigure} \hfill
\begin{subfigure}[b]{.3\textwidth}
  \centering
  \includegraphics[width=4cm]{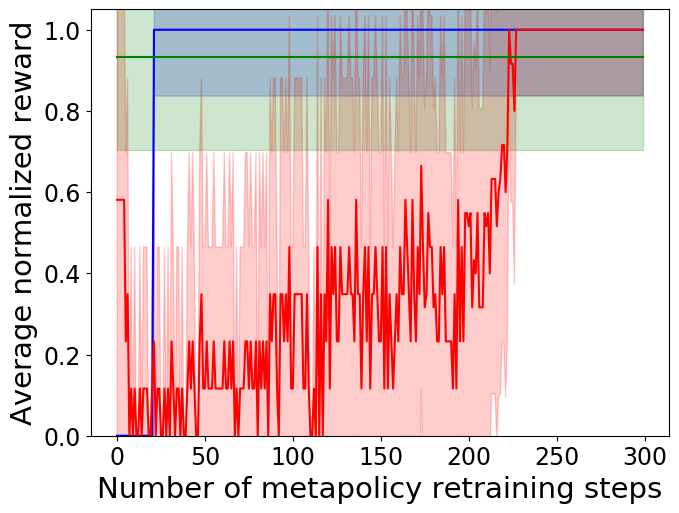}
  \caption{Composite.}
  \label{fig:appendix-discrete-composability-composite}
\end{subfigure} \hfill
\begin{subfigure}[b]{.3\textwidth}
  \centering
  \includegraphics[width=4cm]{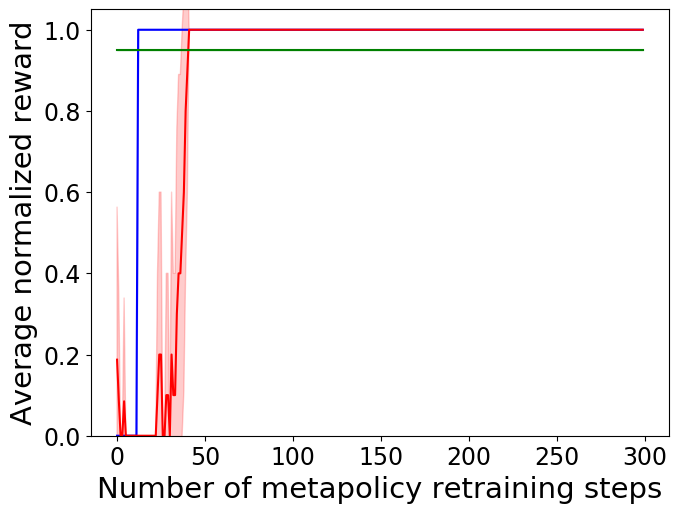}
  \caption{OR.}
  \label{fig:appendix-discrete-composability-or}
\end{subfigure} \hfill
\begin{subfigure}[b]{.3\textwidth}
  \centering
  \includegraphics[width=4cm]{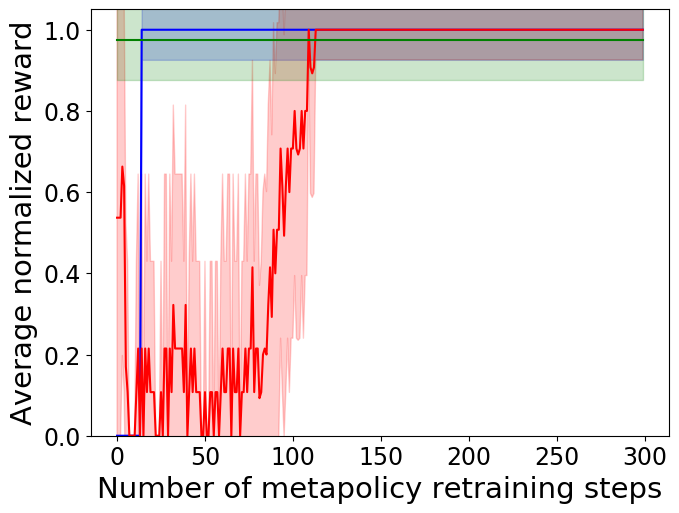}
  \caption{IF.}
  \label{fig:appendix-discrete-composability-if}
\end{subfigure} \hfill
\begin{subfigure}[b]{.3\textwidth}
  \centering
  \includegraphics[width=4cm]{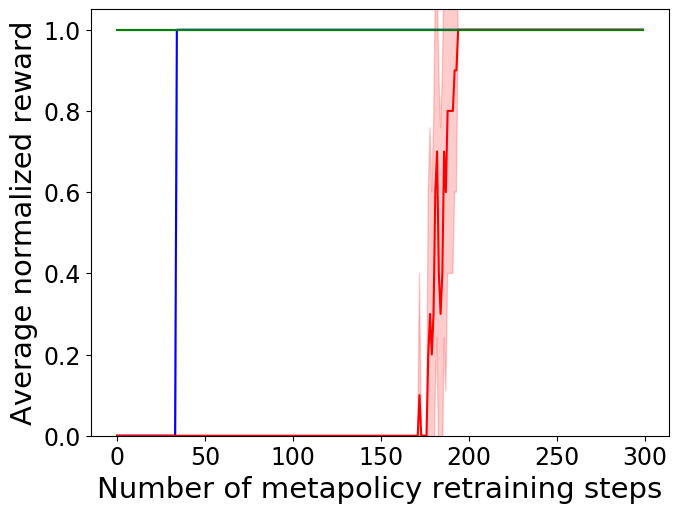}
  \caption{Sequential.}
  \label{fig:appendix-discrete-composability-sequential}
\end{subfigure}

\begin{subfigure}[b]{\textwidth}
  \centering
  \includegraphics[height=0.6cm]{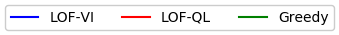}
  \label{fig:appendix-discrete-composability-legend}
\end{subfigure}

\caption{All composability experiments for the delivery domain.}
\label{fig:appendix-discrete-composability-experiments}
\end{figure*}

\begin{figure*}[!th]
\centering
\begin{subfigure}[b]{0.3\textwidth}
  \centering
  \includegraphics[width=4cm]{figures/reacher_environment-cropped.pdf}
  \caption{Delivery domain.}
  \label{fig:appendix-continuous-domain-composablity}
\end{subfigure} \hfill
\begin{subfigure}[b]{.3\textwidth}
  \centering
  \includegraphics[width=4cm]{figures/continuous/composability/results_averaged_over_tasks.png}
  \caption{Averaged.}
  \label{fig:appendix-continuous-composability-average}
 \end{subfigure} \hfill
\begin{subfigure}[b]{.3\textwidth}
  \centering
  \includegraphics[width=4cm]{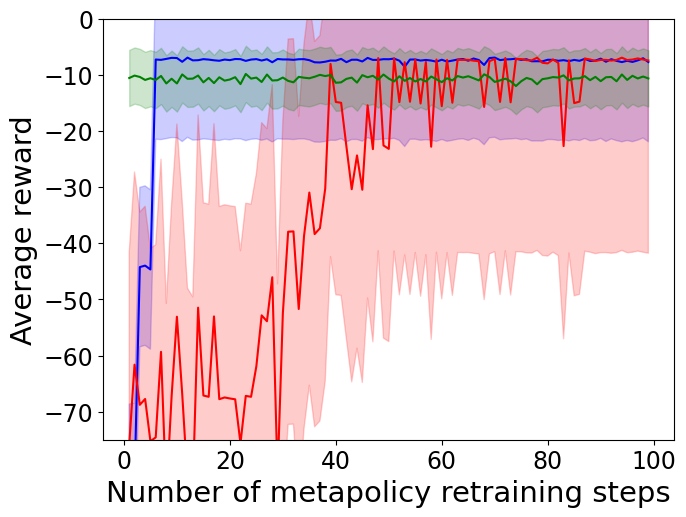}
  \caption{Composite.}
  \label{fig:appendix-continuous-composability-composite}
\end{subfigure} \hfill
\begin{subfigure}[b]{.3\textwidth}
  \centering
  \includegraphics[width=4cm]{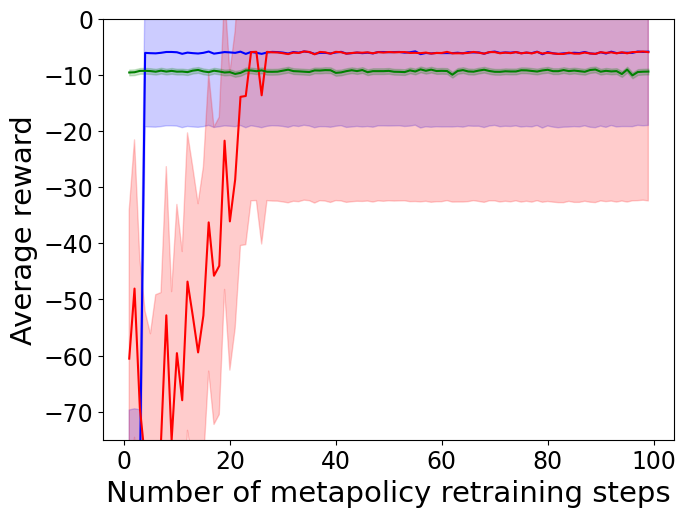}
  \caption{OR.}
  \label{fig:appendix-continuous-composability-or}
\end{subfigure} \hfill
\begin{subfigure}[b]{.3\textwidth}
  \centering
  \includegraphics[width=4cm]{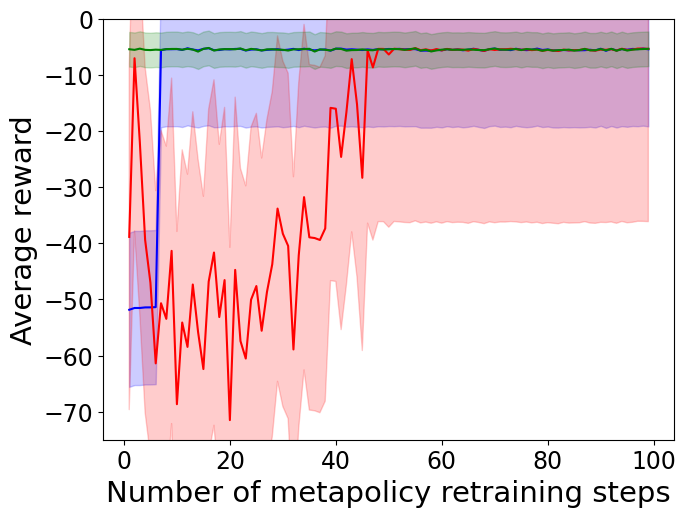}
  \caption{IF.}
  \label{fig:appendix-continuous-composability-if}
\end{subfigure} \hfill
\begin{subfigure}[b]{.3\textwidth}
  \centering
  \includegraphics[width=4cm]{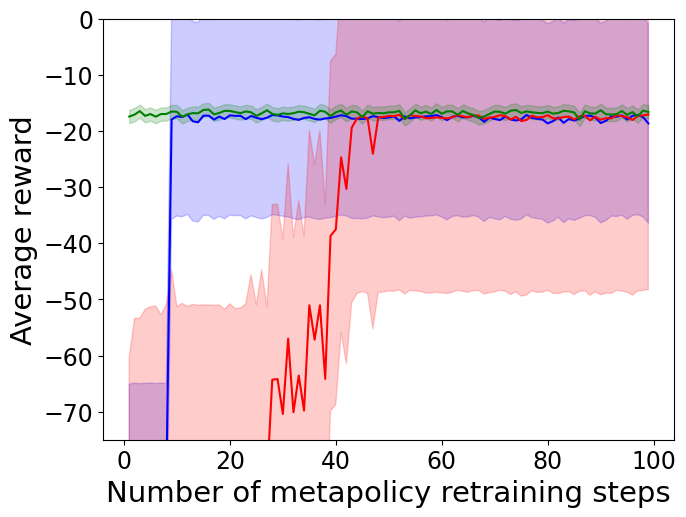}
  \caption{Sequential.}
  \label{fig:appendix-continuous-composability-sequential}
\end{subfigure}

\begin{subfigure}[b]{\textwidth}
  \centering
  \includegraphics[height=0.6cm]{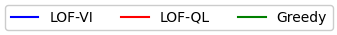}
  \label{fig:appendix-continuous-composability-legend}
\end{subfigure}

\caption{All composability experiments for the reacher domain.}
\label{fig:appendix-continuous-composability-experiments}
\end{figure*}

\begin{figure*}[!th]
\centering
\begin{subfigure}[b]{0.3\textwidth}
  \centering
  \includegraphics[width=3.4cm]{figures/arm_domain-cropped.pdf}
  \caption{Delivery domain.}
  \label{fig:appendix-arm-domain-composablity}
\end{subfigure} \hfill
\begin{subfigure}[b]{.3\textwidth}
  \centering
  \includegraphics[width=4cm]{figures/arm/composability/results_averaged_over_tasks.png}
  \caption{Averaged.}
  \label{fig:appendix-arm-composability-average}
 \end{subfigure} \hfill
\begin{subfigure}[b]{.3\textwidth}
  \centering
  \includegraphics[width=4cm]{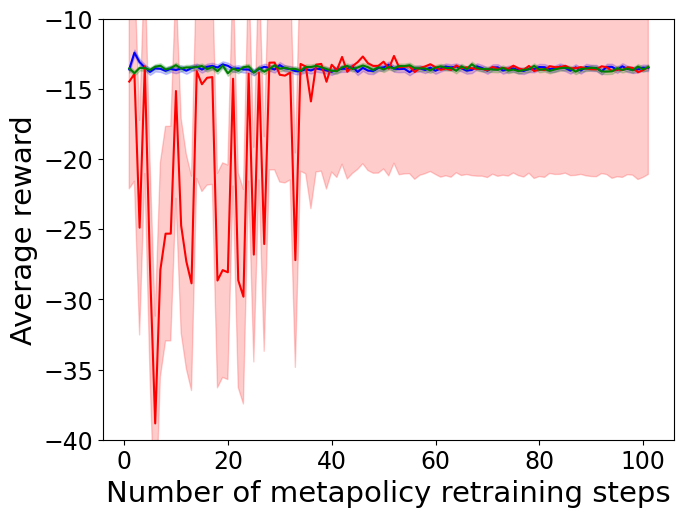}
  \caption{Composite.}
  \label{fig:appendix-arm-composability-composite}
\end{subfigure} \hfill
\begin{subfigure}[b]{.3\textwidth}
  \centering
  \includegraphics[width=4cm]{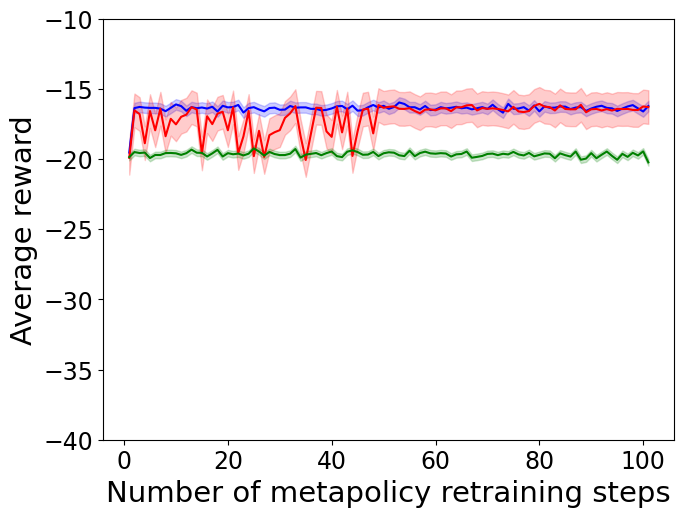}
  \caption{OR.}
  \label{fig:appendix-arm-composability-or}
\end{subfigure} \hfill
\begin{subfigure}[b]{.3\textwidth}
  \centering
  \includegraphics[width=4cm]{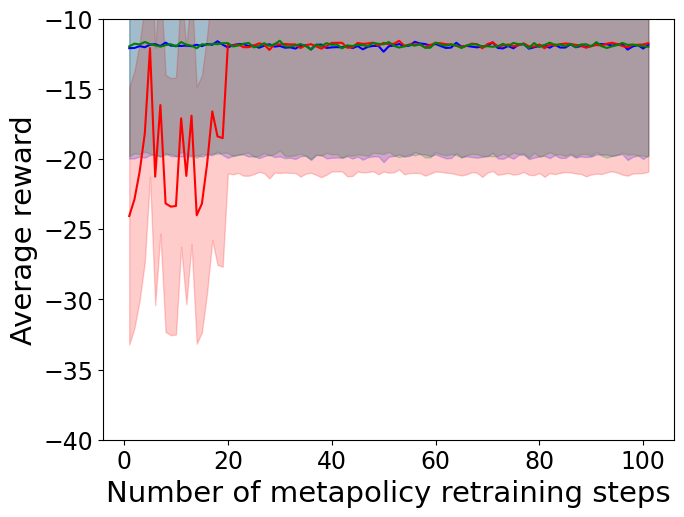}
  \caption{IF.}
  \label{fig:appendix-arm-composability-if}
\end{subfigure} \hfill
\begin{subfigure}[b]{.3\textwidth}
  \centering
  \includegraphics[width=4cm]{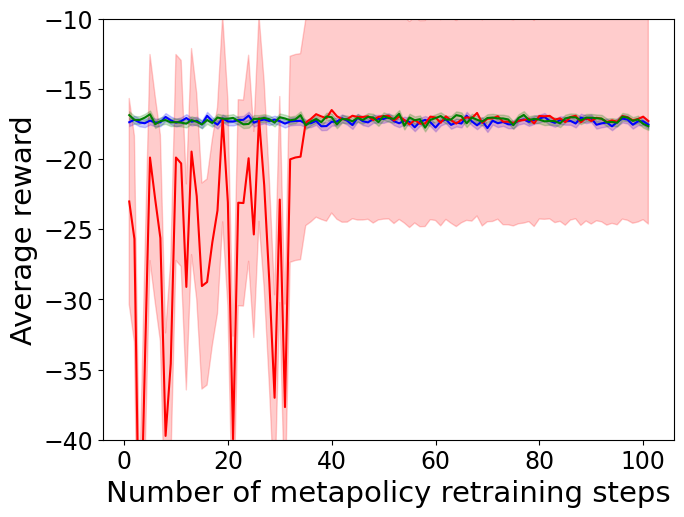}
  \caption{Sequential.}
  \label{fig:appendix-arm-composability-sequential}
\end{subfigure}

\begin{subfigure}[b]{\textwidth}
  \centering
  \includegraphics[height=0.6cm]{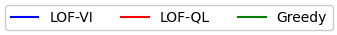}
  \label{fig:appendix-arm-composability-legend}
\end{subfigure}

\caption{All composability experiments for the pick-and-place domain.}
\label{fig:appendix-arm-composability-experiments}
\end{figure*}

\section{Further Discussion}

\paragraph{What happens when incorrect rules are used?} One benefit of representing the rules of the environment as LTL formulae/automata is that these forms of representing rules are much more interpretable than alternatives (such as neural nets). Therefore, if an agent's learned policy has bad behavior, a user of LOF can inspect the rules to see if the bad behavior is a consequence of a bad rule specification. Furthermore, one of the consequences of composability is that any modifications to the FSA will alter the resulting policy in a direct and predictable way. Therefore, for example, if an incorrect human-specified task yields undesirable behavior, with our framework it is possible to tweak the task and test the new policy without any additional low-level training (however, tweaking the safety rules would require retraining the logical options).

\paragraph{What happens if there is a rule conflict?} If the specified LTL formula is invalid, the LTL-to-automaton translation tool will either throw an error or return a trivial single-state automaton that is not an accepting state. Rollouts would terminate immediately.

\paragraph{What happens if the agent can't satisfy a task without violating a rule?} The solution to this problem depends on the user's priorities. In our formulation, we have assigned finite costs to rule violations and an infinite cost to not satisfying the task (see Appendix~\ref{sec:appendix-proofs}). We have prioritized task satisfaction over safety satisfaction. However, it is possible to flip the priorities around by terminating training/rollouts if there is a safety violation. In our proofs, we have assumed that the agent can reach every subgoal from any state, implying either that it is always possible to avoid safety violations or that safety violations are allowed.

\paragraph{Why is the safety property not composable?} The safety property is not composable because we allow safety propositions to be associated with more than one state in the environment (unlike subgoals). The fact that there can be multiple instances of a safety proposition in the environment means that it is impossible to guarantee that a new option policy will be optimal if retraining is done only at the level of the safety automaton and not also over the low-level states. In order to guarantee optimality, retraining would have to be done over both the high and low levels (the safety automaton and the environment). Our definition of composability involves only replanning over the high level of the FSA. Therefore, safety properties are not composable. Furthermore, rewards/costs of the safety property can be associated with propositions and not just with states (as with the liveness property). This is because a safety violation via one safety proposition (e.g., a car going onto the wrong side of the road) may incur a different penalty than a violation via a different proposition (a car going off the road). The propositions are associated with low-level states of the environment. Therefore any retraining would have to involve retraining at both the high and low levels, once again violating our definition of composability.


\paragraph{Simplifying the option transition model:} In our experiments, we simplify the transition model by setting $\gamma = 1$, an assumption that does not affect convergence to optimality. In the case where $\gamma = 1$, Eq.~\ref{eq:option-transitions} reduces to $T_o(s' \vert s) = \sum_k p(s', k)$. Assuming that the option terminates only at state $s_g$, then Eq.\ref{eq:option-transitions} further reduces to $T_o(s_g | s) = 1$ and $T_o(s' | s) = 0$ for all other $s' \neq s_g$. Therefore no learning is required for the transition model. For cases where the assumption that $\gamma = 1$ does not apply, \citep{abel2019expected} contains an interesting discussion. 

\paragraph{Learning the option reward model:} The option reward model $R_o(s)$ is the expected reward of carrying out option $o$ to termination from state $s$. It is equivalent to a value function. Therefore, it is convenient if the policy-learning algorithm used to learn the options learns a value function as well as a policy (e.g., Q-learning and PPO). However, as long as the expected return can be computed between pairs of states, it is not necessary to learn a complete value function. This is because during Logical Value Iteration, the reward model is only queried at discrete points in the state space (typically corresponding to the initial state and the subgoals). So as long as expected returns between the initial state and subgoals can be computed, Logical Value Iteration will work.

\paragraph{Why is \texttt{LOF-VI} so much more efficient than the \texttt{RM} baseline?} In short, \texttt{LOF-VI} is more efficient than \texttt{RM} because \texttt{LOF-VI} has a dense reward function during training and \texttt{RM} has a sparse reward function. During training, \texttt{LOF-VI} trains the options independently and rewards the agent for reaching the subgoals associated with the options. This is in effect a dense reward function. The generic reward function for \texttt{RM} only rewards the agent for reaching the goal state. There are no other high-level rewards to guide the agent through the task. This is a very sparse reward that results in less efficient training. \texttt{RM}'s reward function could easily be made dense by rewarding every transition of the automaton. In this case, \texttt{RM} would probably train as efficiently as \texttt{LOF-VI}. However, imagine an FSA with two paths to the goal state. One path has only 1 transition but has much lower low-level cost, and one path has 20 transitions and a much higher low-level cost. \texttt{RM} might learn to prefer the reward-heavy 20-transition path rather than the reward-light 1-transition path, even if the 1-transition path results in a lower low-level cost. In theory it might be possible to design an \texttt{RM} reward function that adjusts the automaton transition reward depending on the length of the path that the state is in, but this would not be a trivial task when accounting for branching and merging paths. We therefore decided that it would be a fairer comparison to use a trivial \texttt{RM} reward function, just as we use a trivial reward function for the LOF baselines. However, we were careful to not list increased efficiency in our list of contributions; although increased efficiency was an observed side effect of LOF, LOF is not inherently more efficient than other algorithms besides the fact that it automatically imposes a dense reward on reaching subgoals.


\end{document}